\newcommand{\R}{\mathbb{R}}
\newcommand{\p}{\mathbb{P}}
\newcommand{\E}{\mathbb{E}}
\newcommand{\1}{\mathbf{1}}
\newcommand{\T}{\mathcal{T}}
\newcommand{\G}{\mathbb{G}}
\newcommand{\g}{\, | \,}
\newcommand{\mysec}[1]{Section~\ref{sec:#1}}
\newcommand{\myfig}[1]{Figure~\ref{fig:#1}}
\newtheorem{theorem}{Theorem}[section]
\newtheorem{lemma}[theorem]{Lemma}
\newtheorem{definition}[theorem]{Definition}
\begin{document}

\title{Dirichlet Process Mixtures of Generalized Linear Models}

\author{Lauren A. Hannah, David M. Blei, Warren B. Powell}

\maketitle

\begin{abstract}
  We propose Dirichlet Process mixtures of Generalized Linear Models (DP-GLM), a new method of nonparametric regression that accommodates continuous and categorical inputs, and responses that can be modeled by a generalized linear model.  We prove conditions for the asymptotic unbiasedness of the DP-GLM regression mean function estimate.  We also give examples for when those conditions hold, including models for compactly supported continuous distributions and a model with continuous covariates and categorical response.  We empirically analyze the properties of the DP-GLM and why it provides better results than existing Dirichlet process mixture regression models.  We evaluate DP-GLM on several data sets, comparing it to modern methods of nonparametric regression like CART, Bayesian trees and Gaussian processes.  Compared to existing techniques, the DP-GLM provides a single model (and corresponding inference algorithms) that performs well in many regression settings.
  \end{abstract}

\section{Introduction}
\label{sec:intro}

In this paper, we examine the general regression problem.  The general
regression problem models a response variable $Y$ as dependent on a
set of covariates $x$,
\begin{equation}
  Y \, | \, x \sim f(m(x)).
\end{equation}
The function $m(x)$ is called the \textit{mean function}, which maps
the covariates to the conditional mean of the response; the
distribution $f$ characterizes the deviation of the response from its
conditional mean.  The simplest example of general regression is
linear regression, where $m(x)$ is a linear function of $x$ and $f$ is
a Gaussian distribution with mean $m(x)$ and fixed variance.


The linear regression methodology is generalized to many types of
response variables with \textit{generalized linear models} (GLMs)~\citep{McNe89}.  In
their canonical form, a GLM assumes that the conditional mean of the
response is a linear function of the covariates and that the response
distribution is in an exponential family.  GLMs generalize many
classical regression and classification methods beyond linear
regression, including logistic regression, multinomial regression, and
Poisson regression.


A considerable restriction imposed by a GLM is that the covariates
must enter the distribution of the response through a linear function.
(A non-linear function can be applied to the output of the linear
function, but only one that does not depend on the covariates.)  For real world applications where the distribution of the response depends on the covariates in a non-linear way, this assumption is limiting.  Flexibly fitting non-linear response functions is the
problem of \textit{nonparametric regression}.

Our goal in this paper is to develop a general-purpose method for
nonparametric regression.  We develop an algorithm that can capture
arbitarily shaped response functions, model diverse response types and
covariate types, accommodate high dimensional covariates, and capture
heteroscedasticity, i.e., the property of the response distribution
where both its mean and variance change with the covariates.

Our idea is to model $m(x)$ by a mixture of simpler ``local'' response
distributions $f_i(m_i(x))$, each one applicable in a region of the
covariates that exhibits similar response patterns.  To handle
multiple types of responses, each local regression is a GLM.  Notice
this means that each $m_i(x)$ is a linear function---the desired
non-linear mean function arises when we marginalize out the
uncertainty about which local response distribution is in play.  (See
\myfig{demo} for a simple example with one covariate and a continuous
response function.)  Furthermore, our method captures
heteroscedasticity.  Each GLM $f_i$ can vary in a way beyond the
variability that arises from a single linear function of the
covariates.

Finally, we take a Bayesian nonparametric approach to determining the
number of local regressions needed to explain, and form predictions
about, a particular data set.  With a Bayesian nonparametric mixture
model, we let the data determine both the number and form of simple
mean functions that are mixed.  This is critical for the objective
of modeling arbitrary response distributions: complex response
functions can be constructed with many local functions, while simple
response functions need only a small number.  Unlike frequentist
nonparametric regression methods, e.g., those that create a mean
function for each data point, the Bayesian nonparametric approach is
biased to using only as complex a model as the data allow.

Thus, we develop \textit{Dirichlet process mixtures of generalized
  linear models} (DP-GLMs), a Bayesian nonparametric regression model
that combines the advantages of generalized linear models with the
flexibility of nonparametric regression.  DP-GLMs are a generalization
of several existing DP-based, covariate/response specific regression models~\citep{MuErWe96,ShNe07}
to a variety of response distributions.  We derive Gibbs sampling
algorithms for fitting and predicting with DP-GLMs.  We investigate
some of the statistical properties of these models, such as the form
of their posterior and conditions for the asymptotic unbiasedness of
their predictions.  We study DP-GLMs with several types of data.

In addition to defining and discussing the DP-GLM, a central
contribution of this paper is our theoretical analysis of its response
estimator and, specifically, the asymptotic unbiasedness of its
predictions.  Asymptotic properties help justify the use of certain
regression models, but they have largely been ignored for regression
models with Dirichlet process priors.  We will give general conditions
for asymptotic unbiasedness, and examples of when they are satisfied.
(These conditions are model-dependent, and can be difficult to check.)

The rest of this paper is organized as follows.  In Section \ref{sec:related-work}, we review the current research literature on Bayesian nonparametric regression and highlight how the DP-GLM extends this
field.  In Section \ref{sec:mathReview}, we review
Dirichlet process mixture models and generalized linear models. In
Section \ref{sec:dpglm}, we construct the DP-GLM and derive algorithms
for posterior computation.  In Section \ref{sec:theory} we
give general conditions for unbiasedness and prove it in a specific
case with conjugate priors. In Section \ref{sec:numbers} we compare
DP-GLM and existing methods on three data sets.  We illustrate that
the DP-GLM provides a powerful nonparametric regression model that can
accommodate many data analysis settings.

\section{Related work}
\label{sec:related-work}

Gaussian process (GP), Bayesian regression trees and Dirichlet process
mixtures are the most common prior choices for Bayesian nonparametric
regression.  GP priors assume that the observations arise from a
Gaussian process model with known covariance function form (see
\citet{RaWi06} for a review).  Without modification, however, the GP
model is only applicable to problems with continuous covariates and
constant variance.  The assumption of constant covariance can be eased
by using Dirichlet process mixtures of GPs~\citep{RaGh02} or treed
GPs~\citep{GrLe08}.  Bayesian regression trees place a prior over the
size of the tree and can be viewed as an automatic bandwidth selection
method for classification and regression trees
(CART)~\citep{ChGeMc98}.  Bayesian trees have been expanded to include
linear models~\citep{ChGeMc02} and GPs~\citep{GrLe08} in the leaf
nodes.



In a regression setting, the Dirichlet process has been mainly used
for problems with a continuous response.  \citet{WeMuEs94,EsWe95} and
\citet{MuErWe96} used joint Gaussian mixtures for the covariates and
response, and \citet{RoDuGe09} generalized this method using dependent
DPs for multiple response functionals.  However, the method of
\citet{RoDuGe09} can be slow if a fully populated covariance matrix is
used, and is potentially inaccurate if it is assumed diagonal.  To
avoid these issues---which amount to over-fitting the covariate
distribution and under-fitting the response---some researchers have
developed methods that use local weights on the covariates to produce
local response DPs.  This has been achieved with kernels and basis
functions~\citep{GrSt07,DuPiPa07}, GPs~\citep{GeKoMa05} and general
spatial-based weights~\citep{GrSt06,GrSt07,DuGuGe07}.  Still other
methods, again based on dependent DPs, capture similarities between
clusters, covariates or groups of outcomes, including in non-continuous settings~\citep{DeMuRo04,RoDuGe09}.
The method presented here is equally applicable to the continuous
response setting and tries to balance its fit of the covariate and
response distributions by introducing local GLMs---the clustering
structure is based on both the covariates and how the response varies
with them.

There is somewhat less research that develops Bayesian nonparametric
models for other types of response. \citet{MuGe97} and \citet{IbKl98}
used a DP prior for the random effects portion of a GLM.  Likewise,
\citet{AmGhGh03} used a DP prior to model arbitrary symmetric error
distributions in a semi-parametric linear regression model.  While
these are powerful extensions of regression models, they still
maintain the assumption that all covariates enter the model linearly
in the same way.  Our work is closest to \citet{ShNe07}.  They
proposed a model that mixes over both the covariates and response,
where the response is drawn from a multinomial logistic model.  The
DP-GLM studied here is a generalization of their idea.



Finally, asymptotic properties of Dirichlet process regression models
have not been well studied.  Most current literature centers around
consistency of the posterior density for DP Gaussian mixture models
\citep{BaScWa99,GhGhRa99,GhRa03,Wa04,To06} and semi-parametric linear
regression models \citep{AmGhGh03,To06}.  Only recently have the
posterior properties of DP regression estimators been studied.
\citet{RoDuGe09} showed point-wise asymptotic unbiasedness for their
model, which uses a dependent Dirichlet process prior, assuming
continuous covariates under different treatments with a continuous
responses and a conjugate base measure (normal-inverse Wishart).  
In Section \ref{sec:theory} we show pointwise asymptotic unbiasedness of the DP-GLM in both the continuous and categorical response settings.  In the continuous response setting, our results generalize those of \citet{RoDuGe09} and \citet{Ro09}.  Moreover in the categorical response setting, the same theoretical framework provides the same consistency results for the classification model of \citet{ShNe07}.

\section{Mathematical background}
\label{sec:mathReview}

In this section we provide some mathematical background.  We review Dirichlet
process mixture models and generalized linear models.

\paragraph{Dirichlet Process Mixture Models.}
\label{sec:mixture}

The \textit{Dirichlet process} (DP) is a distribution over
distributions~\citep{Fe73}.  It is denoted,
\begin{equation}
  G \sim \textrm{DP}(\alpha G_0),
\end{equation}
where $G$ is a random distribution.  There are two parameters.  The
base distribution $G_0$ is a distribution over the same space as $G$,
e.g., if we want $G$ to be a distribution on reals then $G_0$ must be
a distribution on reals too.  The concentration parameter $\alpha$ is
a positive scalar.  An important property of the DP is that random
distributions $G$ are discrete, and each places its mass on a
countably infinite collection of atoms drawn from $G_0$.


Consider the model
\begin{eqnarray}
  G &\sim& \textrm{DP}(\alpha, G_0) \\
  \theta_i &\sim& G
\end{eqnarray}
The joint distribution of $n$ replicates of $\theta_i$ is
\begin{equation}
  p(\theta_{1:n} \g \alpha, G_0) =
  \int \left(\prod_{i=1}^{n} G(\theta_i) \right) P(G) dG
\end{equation}
One write this joint in a simpler form.  Specifically, the conditional
distribution of $\theta_{n}$ given $\theta_{1:(n-1)}$ follows a Polya
urn distribution~\citep{BlMa73},
\begin{equation}\label{eq:polya}
  \theta_{n} | \theta_{1:(n-1)} \sim \frac{1}{\alpha + n - 1} \sum_{i=1}^{n-1}
  \delta_{\theta_i} + \frac{\alpha}{\alpha + n - 1} \G_0.
\end{equation}
With the chain rule, this specifies the full joint distribution of
$\theta_{1:n}$.

Equation (\ref{eq:polya}) reveals the \textit{clustering property} of
the joint distribution of $\theta_{1:n}$: There is a positive
probability that each $\theta_i$ will take on the value of another
$\theta_j$, leading some of the draws to share values.  This equation
also makes clear the roles of scaling parameter $\alpha$ and base
distribution $\G_0$.  The unique values contained in $\theta_{1:n}$
are drawn independently from $\G_0$ and the parameter $\alpha$
determines how likely $\theta_{n+1}$ is to be a newly drawn value from
$\G_0$ rather than take on one of the values from $\theta_{1:n}$.  The
base measure $\G_0$ controls the distribution of a newly drawn
value.\footnote{Technically, if $G_0$ is itself discrete then the
  ``unique'' values can themselves share values.}

In a DP mixture, $\theta$ is a latent parameter to an observed data
point $x$~\citep{An74},
\begin{align}\notag
  P &\sim \textrm{DP}(\alpha \G_0), \\\notag
  \Theta_i &\sim  P, \\\notag
  x_i|\theta_i &\sim  f(\cdot \g \theta_i).
\end{align}
Examining the posterior distribution of $\theta_{1:n}$ given $x_{1:n}$
brings out its interpretation as an ``infinite clustering'' model.
Because of the clustering property, observations are grouped by their
shared parameters. Unlike finite clustering models, however, the
number of groups is random and unknown.  Moreover, a new data point
can be assigned to a new cluster that was not previously seen in the
data.

\paragraph{Generalized Linear Models.}\label{sec:glm}

Generalized linear models (GLMs) build on linear regression to provide
a flexible suite of predictive models.  GLMs relate a linear model to
a response via a link function; examples include familiar models like
logistic regression, Poisson regression, and multinomial regression.
See \citet{McNe89} for a full discussion.

GLMs have three components: the conditional probability model for
response $Y$, the linear predictor and the link function.  The
probability model for $Y$, dependent on covariates $X$, is
\begin{equation}
  \notag f(y|\eta) = \exp\left(\frac{ y \eta -
      b(\eta)}{a(\phi)} + c(y,\phi)\right).
\end{equation}
Here the canonical form of the exponential family is given, where $a$,
$b$, and $c$ are known functions specific to the exponential family,
$\phi$ is an arbitrary scale (dispersion) parameter, and $\eta$ is the
canonical parameter.  A linear predictor, $X \beta$, is used to
determine the canonical parameter through a set of transformations.
It can be shown that $b^{\prime}(\eta) = \mu =
\E[Y|X]$~\citep{Br86}. However, we can choose a link function
$g$ such that $\mu = g^{-1}(X\beta),$ which defines $\eta$ in terms of
$X\beta$.  The canonical form is useful for discussion of GLM
properties, but we use the form parameterized by mean $\mu$ in the rest of this paper.  GLMs
are simple and flexible---they are an attractive choice for a local
approximation of a global response function.


\section{Dirichlet process mixtures of generalized linear models}
\label{sec:dpglm}

We now turn to Dirichlet process mixtures of generalized linear models
(DP-GLMs), a Bayesian predictive model that places prior mass on a
large class of response densities.  Given a data set of
covariate-response pairs, we describe Gibbs sampling algorithms for
approximate posterior inference and prediction.  Theoretical
properties of the DP-GLM are developed in \mysec{theory}.

\subsection{Model formulation}

\begin{figure}
\begin{center}
\includegraphics*[width=6in, viewport = 10 220 600 570]{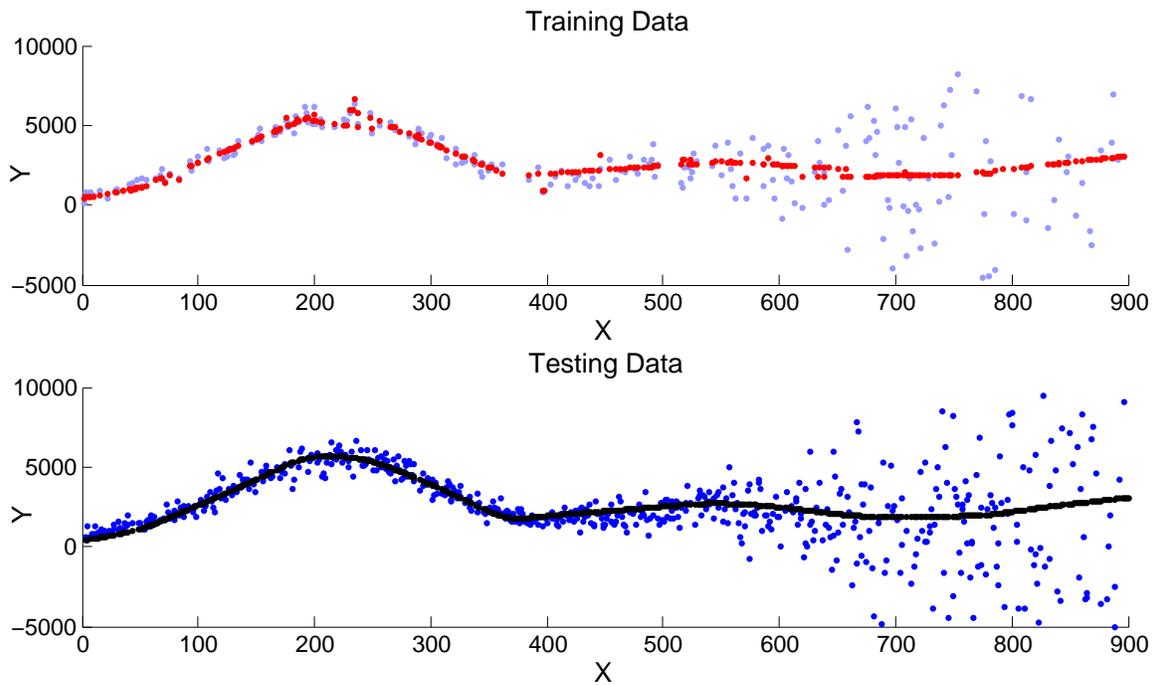}
\caption{The top figure shows the training data (gray) fitted into
  clusters, with the prediction given a single sample from the posterior, $\theta^{(i)}$ (red). The bottom figure shows the smoothed regression estimate (black) for the Gaussian model of Equation (\ref{eq:linear}) with the testing data (blue).  Data plot multipole moments against power spectrum $C_{\ell}$ for cosmic microwave background radiation~\citep{Be03}.}\label{fig:demo}
  \end{center}
\end{figure}

In a DP-GLM, we assume that the covariates $X$ are modeled by a
mixture of exponential-family distributions, the response $Y$ is
modeled by a GLM conditioned on the inputs, and that these models are
connected by associating a set of GLM coefficients with each
exponential family mixture component.  Let $\theta = (\theta_x,
\theta_y)$ denote the bundle of parameters over $X$ and $Y \g X$, and
let $\G_0$ denote a base measure on the space of both.  For example,
$\theta_x$ might be a set of $d$-dimensional multivariate Gaussian
location and scale parameters for a vector of continuous covariates;
$\theta_y$ might be a $d+2$-vector of reals for their corresponding
GLM linear prediction coefficients, along with a GLM dispersion
parameter.  The full model is
\begin{align}
  \notag
  P & \sim DP(\alpha\G_0),\\\notag
  \theta = (\theta_{i,x}, \theta_{y,i}) | P & \sim P,\\\notag
  X_i|\theta_{i,x} & \sim f_{x}(\cdot|\theta_{i,x}), \\\notag
  Y_i|x_i,\theta_{i,y} & \sim GLM(\cdot|X_i,\theta_{i,y}).
\end{align}
The density $f_x$ describes the covariate distribution; the GLM for
$y$ depends on the form of the response (continuous, count, category,
or others) and how the response relates to the covariates (i.e., the
link function).

The Dirichlet process clusters the covariate-response pairs $(x,y)$.
When both are observed, i.e., in ``training,'' the posterior
distribution of this model will cluster data points according to
near-by covariates that exhibit the same kind of relationship to their
response.  When the response is not observed, its predictive
expectation can be understood by clustering the covariates based on
the training data, and then predicting the response according to the
GLM associated with the covariates' cluster.  The DP prior acts as a
kernel for the covariates; instead of being a Euclidean metric, the DP
measures the distance between two points by the probability that the
hidden parameter is shared.  See Figure \ref{fig:demo} for a
demonstration of the DP-GLM.

We now give a few examples of the DP-GLM that will be used throughout this paper.

\paragraph{Example: Gaussian Model.}  We now give an example of the
DP-GLM for continuous covariates/response that will be used throughout
the rest of the paper.  For continuous covariates/response in $\R$, we
model locally with a Gaussian distribution for the covariates and a
linear regression model for the response.  The covariates have mean
$\mu_{i,j}$ and variance $\sigma_{i,j}^2$ for the $j^{th}$ dimension
of the $i^{th}$ observation; the covariance matrix is diagonal for
simplicity.  The GLM parameters are the linear predictor
$\beta_{i,0},\dots,\beta_{i,d}$ and the response variance
$\sigma_{i,y}^2.$ Here, $\theta_{x,i} = (\mu_{i,1:d},\sigma_{i,1:d})$
and $\theta_{y,i} = (\beta_{i,0:d},\sigma_{i,y}).$ This produces a
mixture of multivariate Gaussians.  The full model is,
\begin{align}\label{eq:linear}
  P & \sim DP(\alpha\G_0),\\\notag
  \theta_i | P& \sim P, \\\notag
  X_{i,j} | \theta_{i,x}&\sim N\left(\mu_{ij}, \sigma_{ij}^2\right), &  j = 1,\dots,d,\\\notag
  Y_i | X_i, \theta_{i,y} & \sim N\left(\beta_{i0} + \sum_{j=1}^d \beta_{ij}
  X_{ij}, \sigma_{iy}^2\right).
\end{align}




\paragraph{Example: Multinomial Model~\citep{ShNe07}.}This model was proposed by \citet{ShNe07} for nonlinear classification, using a Gaussian mixture to model continuous covariates and a multinomial logistic model for a categorical response with $K$ categories.  The covariates have mean
$\mu_{i,j}$ and variance $\sigma_{i,j}^2$ for the $j^{th}$ dimension
of the $i^{th}$ observation; the covariance matrix is diagonal for
simplicity.  The GLM parameters are the $K$ linear predictor
$\beta_{i,0,k},\dots,\beta_{i,d,k},$ $k = 1,\dots,K$.  The full model is,
\begin{align}\label{eq:multinomial}
  P & \sim DP(\alpha\G_0),\\\notag
  \theta_i | P& \sim P, \\\notag
  X_{i,j} | \theta_{i,x}&\sim N\left(\mu_{ij}, \sigma_{ij}^2\right), & j = 1,\dots,d,\\\notag
  \p(Y_i=k | X_i, \theta_{i,y}) & = \frac{\exp\left(\beta_{i,0,k}+\sum_{j=1}^d\beta_{i,j,k} X_{i,j}\right)}{\sum_{\ell=1}^K \exp\left(\beta_{i,0,\ell}+\sum_{j=1}^d\beta_{i,j,\ell} X_{i,j}\right)}, & k = 1,\dots,K.
\end{align}

\paragraph{Example: Poisson Model with Categorical Covariates.}The categorical covariates are modeled by a mixture of multinomial distributions and the count response by a Poisson distribution.  If covariate $j$ has $K$ categories, let $(p_{i,j,1},\dots,p_{i,j,K})$ be the probabilities for categories $1,\dots,K$.  The covariates are then turned into indicator variables, $\1_{\{X_{i,j}=k\}}$, which are used with the linear predictor, $\beta_i,0, \beta_{i,1,1:K},\dots, \beta_{i,d,1:K}.$  The full model is,
\begin{align}\label{eq:poisson}
  P & \sim DP(\alpha\G_0),\\\notag
  \theta_i | P& \sim P, \\\notag
  \p(X_{i,j}= k | \theta_{i,x})&= p_{i,j,k}, & j = 1,\dots,d, \ k = 1,\dots,K,\\\notag
  \lambda_i | X_i, \theta_{i,y} & = \exp\left( \beta_{i,0} + \sum_{j=1}^d \sum_{k=1}^K \beta_{i,j,k} \1_{\{X_{i,j}=k\}}\right),\\\notag
  \p(Y_i=k | X_i, \theta_{i,y}) & = \frac{e^{-\lambda_i}\lambda_i^k}{\ell!}, & k = 0, 1, 2, \dots.
\end{align}
We apply Model (\ref{eq:poisson}) to data in Section \ref{sec:numbers}.

\subsection{Heteroscedasticity and overdispersion}
One advantage of the DP-GLM is that it provides a strategy for handling common problems in predictive modeling.  Many models, such as GLMs and Gaussian processes, make assumptions
about data dispersion and homoscedasticity.  Over-dispersion occurs in
single parameter GLMs when the data variance is larger than the
variance predicted by the model mean.  \citet{MuGe97} have
successfully used DP mixtures over GLM intercept parameters to create
classes of models that include over-dispersion.  The DP-GLM retains
this property, but is not limited to linearity in the covariates.

Homoscedasticity refers to the property of variance that is constant among all
covariate regions; heteroscedasticity is variance that changes with
the covariates.  Models like GLMs and Gaussian processes assume
homoscedasticity and can give poor fits when that assumption is
violated.  However, the DP-GLM can naturally accommodate
heteroscedasticity when multiparameter GLMs are used, such as linear,
gamma and negative binomial regression models.  The mixture model
setting allows the variance parameter to vary between clusters,
creating smoothly transitioning heteroscedastic posterior response
distributions.

A demonstration of this property is shown in Figure
\ref{fig:cmbHetero}, where the DP-GLM is compared against a
homoscedastic model, Gaussian processes, and heteroscedastic
modifications of homoscedastic models, treed Gaussian processes and
treed linear models.  The DP-GLM is robust to heteroscedastic
data---it provides a smooth mean function estimate, while the other
models are not as robust or provide non-smooth estimates.

\begin{figure}
\centering
\includegraphics[width=6in]{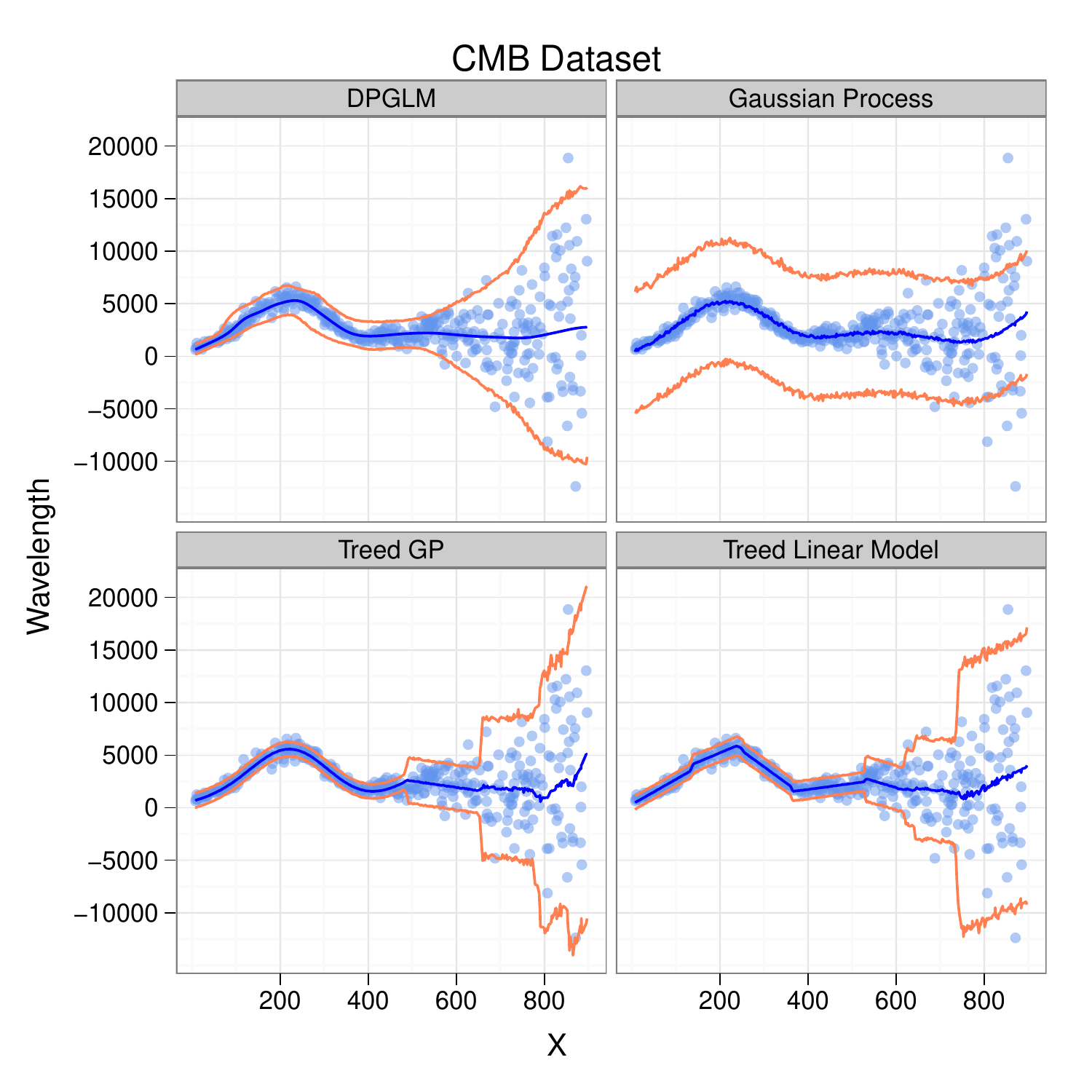}
\caption{Modeling heteroscedasticity with the DP-GLM and other
  Bayesian nonparametric methods.  The estimated mean function is
  given along with a 90\% predicted confidence interval for the
  estimated underlying distribution.  DP-GLM produces a smooth mean
  function and confidence interval.}
\label{fig:cmbHetero}
\end{figure}

\subsection{Posterior prediction with a DP-GLM }

The DP-GLM is used in prediction problems.  Given a collection of
covariate-response pairs $D = (X_i, Y_i)_{i=1}^n$, our goal is to
compute the expected response for a new set of covariates $x$, $\E[Y
\g x, D]$.  We give the step-by-step process for formulating the model
and forming the prediction.

\paragraph{Choosing the mixture component and GLM.}  We begin by
choosing $f_x$ and the GLM.  The Dirichlet process mixture model and GLM provide flexibility both
in terms of the covariates and the response.  Dirichlet process
mixture models allow nearly any type of variable to be modeled within
the covariate mixture and subsequently transformed for use as a
covariate in the GLM.

Note that certain mixture distributions simply support certain types
of covariates but may not necessarily be a good fit.  For example, one covariate might be strictly positive and continuous.  This could be modeled with an exponential mixture,
\begin{equation}\notag
X_i | \lambda_i \sim Exp(\lambda_i).
\end{equation}However, although exponential mixtures have support on $\R_{++}$, they are mixtures of a single parameter exponential family.  This has implications for the variance of the distribution, which is determined by the mean of each component.  Therefore, a mixture of gamma distributions or possibly even Gaussians would be a better fit.  Both the gamma and Gaussian distributions have a mean and dispersion parameter, which free the mixture variance from the mean.

The GLM---for the
conditional response---is chosen in much the same way.

\paragraph{Choosing the base measure and other hyperparameters.} The
choice of the base measure $\G_0$ affects how expressive the DP-GLM
is, the computational efficiency of the prediction and whether some
theoretical properties, such as asymptotic unbiasedness, hold.  For
example, $\G_0$ for the Gaussian model is a distribution over $(\mu_i,
\sigma_i, \beta_{i,0:d}, \sigma_{i,y})$.  A conjugate base measure is
normal-inverse-gamma for each covariate dimension and multivariate
normal inverse-gamma for the response parameters.  This $\G_0$ allows
all continuous, integrable distributions to be supported, retains
theoretical properties, such as asymptotic unbiasedness, and yields highly efficient posterior sampling by allowing the Gibbs sampler to be collapsed~\citep{Ne00}.
However, this base measure is not expressive for small amounts of data. The tails
quickly decline and the mean is tied to the variance.  In summary, the
base measure is often chosen in accordance to data size, distribution
type, distribution features (heterogeneity, etc) and computational
constraints.



Hyperparameters for the DP-GLM include the DP scaling parameter
$\alpha$ and hyperparameters parameters for the base measure $\G_0$.
It is often useful to place a gamma prior on $\alpha$~\citep{EsWe95}, while the
parameters for $\G_0$ may have their own prior as well.  Each level of
priors reduces their influence but adds computational complexity~\citep{EsWe95}.


\paragraph{Approximating the posterior and forming predictions.}  Our
ultimate goal is to form a conditional expectation of the response,
given a new set of covariates $x$ and the observed data $D$, $\E[Y \g
X = x, D]$.  Following the Bayesian regression methodology, we use
iterated expectation, conditioning on the latent variables,
\begin{equation}\label{eq:expectationTower}
\E\left[Y\g X = x, D \right] =
\E\left[ \E \left[ Y \g X = x, \theta_{1:n} \right] \g D\right].
\end{equation}
The inner expectation is straight-forward to compute.  Conditional on
the latent parameters $\theta_{1:n}$ that generated the observed data,
the expectation of the response is
\begin{equation}\label{eq:reg0}
  \E[Y|X = x,\theta_{1:n}] = \frac{\alpha \int_{\T} \E\left[Y|X = x,\theta\right]f_x(x|\theta) \G_0(d\theta) + \sum_{i=1}^n
    \E\left[Y|X = x,\theta_{i}\right]f_x(x|\theta_{i})}{\alpha
    \int_{\T} f_x(x|\theta) \G_0(d\theta) + \sum_{i=1}^n
    f_x(x|\theta_i)}.
\end{equation}Since $Y$ is assumed to be a GLM, the quantity
$\E\left[Y|X = x,\theta\right]$ is analytically available as a function of
$x$ and $\theta$.

The outer expectation of Equation (\ref{eq:expectationTower}) is
generally intractable.  We approximate it by Monte Carlo integration
using $M$ posterior samples of $\theta_{1:n}$,
\begin{equation}\label{eq:intApprox}
  \E \left[Y \g X =x, D\right] \approx \frac{1}{M} \sum_{m=1}^M \E
  \left[ Y \g X = x, \theta_{1:n}^{(m)}\right].
\end{equation}
The observations $\theta_{1:n}^{(m)}$ are i.i.d. from the posterior
distribution of $\theta_{1:n}\g D$.

We use Markov chain Monte Carlo (MCMC) to obtain $M$ i.i.d. samples
from this distribution.  Specifically, we use Gibbs sampling, which is
an effective algorithm for DP mixture models.  (See \citet{Es94},
\citet{Ma94}, \citet{EsWe95} and \citet{MaMu98} for foundational work;
\citet{Ne00} provides a modern treatment and state of the art
algorithms.)  In short, we construct a Markov chain on the hidden
variables $\theta_{1:n}$ such that its limiting distribution is
the posterior of interest.  Details for its implementation are given in Appendix \ref{sec:inference}.



\subsection{Comparison to the Dirichlet process mixture model
  regression}\label{sec:dpRegular}

The DP-GLM directly models $Y$ as being conditioned on $X$.  Modeling the
joint distribution of $(x,y)$ as coming from a common mixture component
in a classical DP mixture (see Section \ref{sec:mathReview}) also induces a conditional
distribution of $Y$ given $X$.  In this Section, we conceptually compare
these two approaches.  (They are compared empirically in this section and Section \ref{sec:numbers}.)

A generic Dirichlet process mixture model (DPMM) has the form,
\begin{align}\label{eq:dpmm}
P & \sim DP(\alpha \G_0),\\\notag
\theta_i | P & \sim P,\\\notag
X_i | \theta_{i,x} & \sim f_x(x|\theta_{i,x}),\\\notag
Y_i | \theta_{i,y} & \sim f_y(y|\theta_{i,y}).
\end{align}DPMMs with this form have been studied for regression~\citep{EsWe95}, but have generally not been used in practice due to poor results (with a diagonal covariance matrix) or computational difficulties (with a full covariance matrix).  We focus on the former case, with diagonal covariance, to study why it has poor results and how the DP-GLM improves on these with a minimal increase in computational difficulty.  The difference between Model (\ref{eq:dpmm}) and the DP-GLM is that
the distribution of $Y$ given $\theta$ is conditionally independent of
the covariates $X$.  This is a small difference that has large
consequences for the posterior distribution and predictive results.

Consider the log-likelihood of the posterior of the DPMM given in Model (\ref{eq:dpmm}).
Assume that $f_y$ is a single parameter exponential, where $\theta_y =
\beta$,
\begin{equation}\label{eq:dpmmPost}
  \ell(\theta^{dp} \g D) \propto \sum_{i=1}^K \left[\ell(\beta_{C_i})
    + \sum_{c \in C_i} \ell(y_c \g \beta_{C_i})
    +\sum_{j=1}^d \ell(\theta_{C_i,x_j} \g D) \right].
\end{equation}
The log-likelihood of the DP-GLM posterior for a single parameter
exponential family GLM, where $\theta_y = (\beta_0,\dots,\beta_d)$, has the form,
\begin{equation}\label{eq:dpglmPost}
  \ell(\theta^{dpglm} \g D)  \propto \sum_{i=1}^K\left[ \sum_{j=0}^d
    \ell(\beta_{C_i,j}) +\sum_{c \in C_i} \ell(y_c \g \beta_{C_i}^T
    x_c)+ \sum_{j=1}^d \ell(\theta_{C_i,x_j}\g D)\right].
\end{equation}
As the number of covariates grows, the likelihood associated with the
covariates grows in both equations.  However, the likelihood
associated with the response also grows with the extra response
parameters in Equation (\ref{eq:dpglmPost}), whereas it is fixed in
Equation (\ref{eq:dpmmPost}).

These posterior differences lead to two predictive differences: 1) the
DP-GLM is much more resistant to dimensionality than the DPMM, and 2)
as the dimensionality grows, the DP-GLM produces less stable
predictions than the DPMM.  Since the number of response related
parameters grows with the number of covariate dimensions in the
DP-GLM, the relative posterior weight of the response does not shrink
as quickly in the DP-GLM as it does in the DPMM.  This keeps the
response variable relatively important in the selection of the mixture
components and hence makes the DP-GLM a better predictor than the DPMM
as the number of dimensions grows.  However, each additional $\beta_j$ adds some noise, so when $d$ is large the DP-GLM estimate tends to be noisy.

While the additional GLM parameters help maintain the relevance of the
response, they also add noise to the prediction.  This can be seen in
Figure \ref{fig:dimCompare}.  The GLM parameters in this figure have a
Gaussian base measure, effectively creating a local ridge
regression.\footnote{In unpublished results, we have also tried other
  sparsity-inducing base measures, such as a Laplacian distribution
  with an L1 penalty.  They produced less stable results than the
  Gaussian base measure, likely due to the sample size differences
  between the clusters.}  In lower dimensions, the DP-GLM produced
more stable results than the DPMM because a smaller number of larger
clusters were required to fit the data well.  The DPMM, however,
consistently produced stable results in higher dimensions as the
response became more of a sample average than a local average.  The
DPMM has the potential to predict well if changes in the mean function
coincide with underlying local modes of the covariate density.  However, the DP-GLM forces the covariates into clusters that coincide more with the response variable due to the inclusion of the slope parameters.


\begin{figure}
\centering
\includegraphics[width=6in]{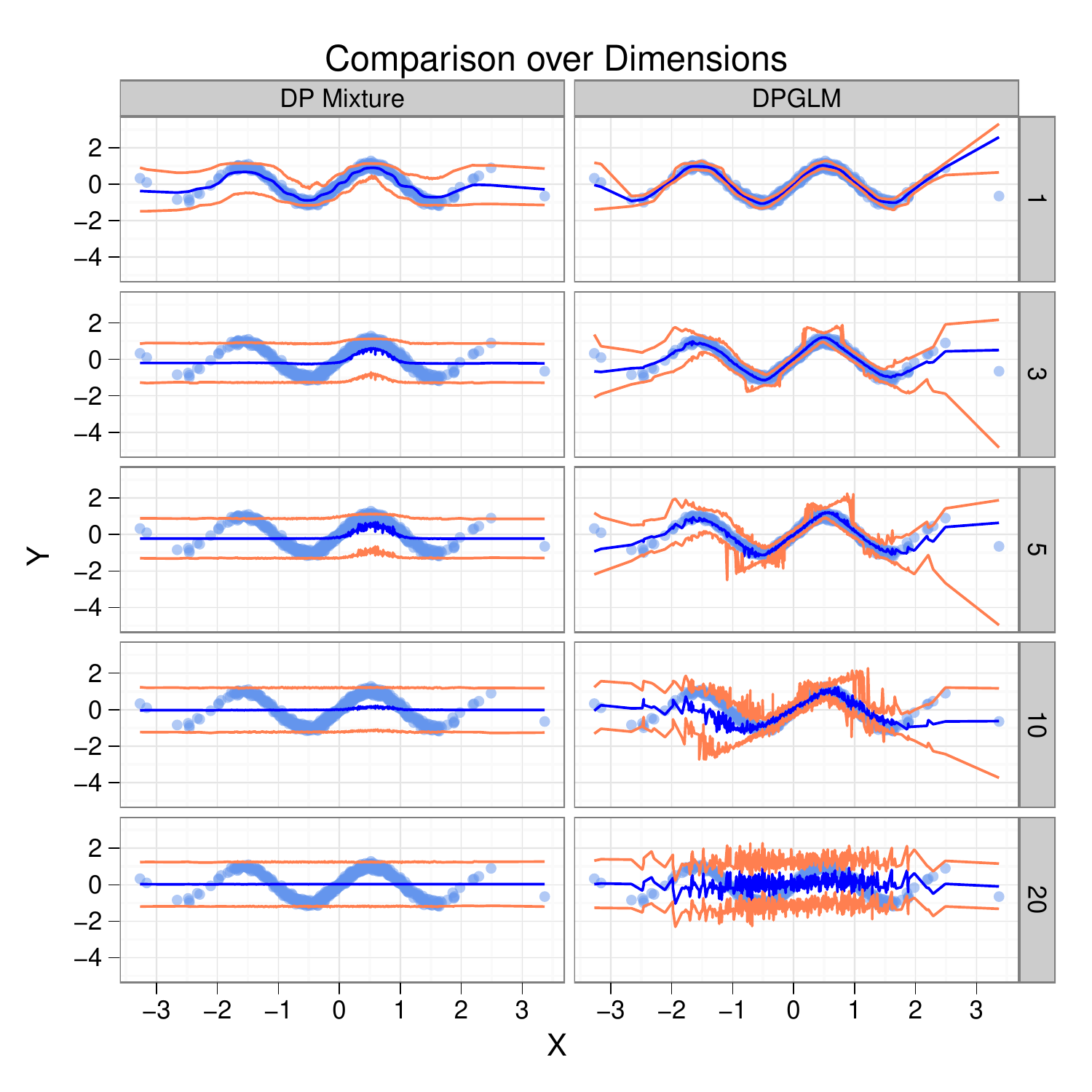}
\caption{A plain Dirichlet process mixture model regression (left)
  versus DP-GLM, plotted against the number of spurious dimensions
  (vertical plots).  The estimated mean function is given along with a
  90\% predicted confidence interval for the estimated underlying
  distribution.  Data have one predictive covariate and a varying
  number of spurious covariates.  The covariate data were generated by
  a mixture model.  DP-GLM produces a smoother mean function and is
  much more resistant to spurious dimensionality.}
\label{fig:dimCompare}
\end{figure}

We now discuss the theoretical properties of the DP-GLM.

\section{Asymptotic Unbiasedness of the DP-GLM Regression
  Model}\label{sec:theory}



A desirable property of any estimator is that it should be unbiased,
particularly in the limit.  \citet{DiFr86} gives an example of a
location model with a Dirichlet process prior where the estimated
location can be bounded away from the true location, even when the
number of observations approaches infinity. We want to assure that
DP-GLM does not end up in a similar position.

Notation for this section is more complicated than the notation for
the model.  Let $f_0(x,y)$ be the true joint distribution of $(x,y)$;
in this case, we will assume that $f_0$ is a density.  Let
$\mathcal{F}$ be the set of all density functions over $(x,y)$.  Let
$\Pi^f$ be the prior over $\mathcal{F}$ induced by the DP-GLM model.
Let $\E_{f_0}[\cdot]$ denote the expectation under the true
distribution and $\E_{\Pi^f}[\cdot]$ be the expectation under the
prior $\Pi^f$.

In general, an estimator is a function of observations.  Assuming a
true distribution of those observations, an estimator is called
unbiased if its expectation under that distribution is equal to the
value that it estimates.  In the case of DP-GLM, that would mean for every $x
\in \mathcal{A}$ and every $n > 0$,
\begin{equation}\notag
  \E_{f_0}\left[\E_{\Pi}[Y | x, (X_i,Y_i)_{i=1}^n]\right] = \E_{f_0}[Y|x],
\end{equation}where $\mathcal{A}$ is some fixed domain, $\E_{\Pi}$ is
the expectation with respect to the prior $\Pi$ and $\E_{f_0}$ is the
expectation with respect to the true distribution.

Since we use Bayesian priors in DP-GLM, we will have bias in almost
all cases~\citep{GeCaSt04}. The best we can hope for is asymptotic
unbiasedness, where as the number of observations grows to infinity,
the mean function estimate converges to the true mean function. That
is, for every $x \in \mathcal{A}$,
\begin{equation}\notag
  \E_{\Pi}[Y | x, (X_i,Y_i)_{i=1}^n] \rightarrow \E[Y|x]\ \ \ \
  \textrm{ as }n \rightarrow \infty.
\end{equation}\citet{DiFr86} give an example for a location problem
with a DP prior where the posterior estimate was not asymptotically
unbiased.  Extending that example, it follows that estimators with DP
priors do not automatically receive asymptotic unbiasedness.  We use
consistency and uniform integrability to give conditions for
asymptotic unbiasedness.

Consistency, the notion that as the number of observations goes to
infinity the posterior distribution accumulates in neighborhoods
arbitrarily ``close'' to the true distribution, is tightly related to
both asymptotic unbiasedness and mean function estimate
existence. Weak consistency assures that the posterior distribution
accumulates in regions of densities where ``properly behaved''
functions (i.e., bounded and continuous) integrated with respect to
the densities in the region are arbitrarily close to the integral with
respect to the true density.  The expectation may not be bounded;
in addition to weak consistency, uniform integrability is needed to
guarantee that the posterior expectation converges to the true
expectation, giving asymptotic unbiasedness.  Uniform integrability
also ensures that the posterior expectation almost surely exists with
every additional observation.  Therefore we need to show weak
consistency and uniform integrability.

\subsection{Asymptotic Unbiasedness}\label{subsec:generalThm}
We approach asymptotic unbiasedness by showing weak consistency for the posterior of the \textit{joint} distribution and then using uniform integrability to show that the conditional expectation $\E_{\Pi^f}[Y\g X = x, (X_i,Y_i)_{i=1}^n]$ exists for every $n$ and converges to the true expectation almost surely.  The proof for this is a bit involved, so it has been placed in the Appendix.

\begin{theorem}\label{thm:generalConvergence}
Let $x$ be in a compact set $\mathcal{C}$ and $\Pi^f$ be a prior on $\mathcal{F}$. If,
\begin{enumerate}[(i)]
\item for every $\delta > 0$, $\Pi^f$ puts positive measure on 
\begin{equation}\notag
\left\{f: \int f_0(x,y) \log \frac{f_0(x,y)}{f(x,y)} dxdy < \delta\right\},\end{equation}
\item $\int |y| f_0(y|x)dy < \infty$ for every $x \in \mathcal{C}$, and
\item there exists an $\epsilon > 0$ such that for every $x\in \mathcal{C}$,
\begin{equation}\notag
\int \int  |y|^{1+\epsilon} f_y(y|x,\theta) \G_0(d\theta) < \infty,
\end{equation}
\end{enumerate}
then for every $n \geq 0$, $\E_{\Pi}[Y|x,(X_i,Y_i)_{1:n}]$ exists and has the limit $\E_{f_0} [Y|X =x]$, almost surely with respect to the observation product measure, $\p_{f_0^{\infty}}$.
\end{theorem}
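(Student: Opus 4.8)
The plan is to reduce the statement to two independent tasks: (a) showing that the posterior over the joint density $f$ is weakly consistent at $f_0$, and (b) showing that the conditional mean functional $f \mapsto \int y\, f(y\mid x)\,dy$ is, uniformly over the posterior draws, integrable enough to pass the limit through the expectation. Condition (i) is precisely the Kullback–Leibler support condition, so by Schwartz's theorem the posterior $\Pi^f(\cdot \mid (X_i,Y_i)_{1:n})$ is weakly consistent at $f_0$, $\p_{f_0^\infty}$-almost surely. Weak consistency alone, however, only controls $\int g\, df$ for bounded continuous $g$, and the map $(x,y)\mapsto y$ on a slice is neither bounded nor does it directly see the conditioning; so the bulk of the work is converting weak convergence of the joint into convergence of a conditional expectation.

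The key steps, in order: first, fix $x\in\mathcal{C}$ and write $\E_{\Pi^f}[Y\mid x,(X_i,Y_i)_{1:n}]$ using the DP predictive formula — it is a ratio whose numerator is $\alpha\int \E[Y\mid x,\theta] f_x(x\mid\theta)\G_0(d\theta) + \sum_i \E[Y\mid x,\theta_i] f_x(x\mid\theta_i)$ and whose denominator is the same with the $\E[Y\mid x,\theta]$ factors removed, exactly as in Equation~(\ref{eq:reg0}), then average over posterior draws of $\theta_{1:n}$. Second, establish that this quantity is well-defined (finite) for every $n$: this is where condition (iii) enters. The $(1+\epsilon)$-moment bound $\int\int |y|^{1+\epsilon} f_y(y\mid x,\theta)\G_0(d\theta)<\infty$ gives, via a de la Vallée–Poussin / uniform-integrability argument, that the family $\{\,y \mapsto y\, f_y(y\mid x,\theta)\,\}$ is uniformly integrable as $\theta$ ranges under $\G_0$ and under the posterior, so the numerator integral converges absolutely and the ratio is finite a.s. Third, I would identify the posterior predictive conditional density $\hat f_n(y\mid x)$ as a density itself and argue that weak consistency of the joint, together with continuity in $x$ coming from condition (ii) (which guarantees the true conditional mean is finite and the conditioning at the point $x$ is non-degenerate), forces $\hat f_n(\cdot\mid x) \Rightarrow f_0(\cdot\mid x)$ weakly. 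Fourth, upgrade weak convergence of the conditionals to convergence of first moments by invoking the uniform integrability from step two: weak convergence plus uniform integrability of $|y|$ under $\hat f_n(\cdot\mid x)$ yields $\int y\,\hat f_n(y\mid x)\,dy \to \int y\, f_0(y\mid x)\,dy = \E_{f_0}[Y\mid X=x]$, which is the claim.

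The main obstacle is the third step — passing from weak convergence of the \emph{joint} posterior to weak convergence of the \emph{conditional} at a fixed point $x$. Conditioning is not a continuous operation in the weak topology in general (it is a ratio of a density to its marginal, and the marginal can vanish), so I expect to need to exploit the specific mixture structure: the numerator and denominator in Equation~(\ref{eq:reg0}) are both smooth functionals of $\theta_{1:n}$ through the kernel $f_x(x\mid\theta)$, and the compactness of $\mathcal{C}$ plus the assumption that $f_x(x\mid\cdot)$ is a genuine density should let me argue the denominator is bounded away from zero on the posterior's effective support. The technically delicate piece is controlling the $\G_0$-term $\alpha\int \E[Y\mid x,\theta] f_x(x\mid\theta)\G_0(d\theta)$ uniformly — this is the "new cluster" contribution that weak consistency of the joint does not obviously tame — and this is exactly why condition (iii) is phrased as an integral against $\G_0$ rather than as a property of $f_0$. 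I would handle it by a truncation argument: split $|y|\le T$ and $|y|>T$, use weak convergence on the bounded part and the $(1+\epsilon)$-moment bound to make the tail uniformly small, then let $T\to\infty$. The remaining steps (the predictive formula, finiteness, and the final Vitali-type limit interchange) are comparatively routine.
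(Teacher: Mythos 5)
Your proposal follows essentially the same route as the paper: Schwartz's theorem turns condition (i) into weak posterior consistency, and condition (iii) (via a $(1+\epsilon)$-moment/uniform-integrability argument) together with (ii) is used to upgrade convergence of the conditional density to convergence of the conditional mean. The step you flag as the main obstacle is dispatched in the paper by a short lemma rather than the machinery you anticipate: the posterior-mean density $f_n(x,y)$ is asserted to converge to $f_0(x,y)$, its $x$-marginal likewise, and the conditional is obtained simply as the ratio $f_n(x,y)/f_n(x)$, with no appeal to the predictive formula of Equation~(\ref{eq:reg0}), denominator lower bounds, or truncation.
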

The conditions of Theorem \ref{thm:generalConvergence} must be checked
for the problem ($f_0$) and prior ($\Pi^f$) pair, and can be difficult to show.  Condition $(i)$ assures weak consistency of the posterior, condition $(ii)$ guarantees a mean function exists in the limit and condition $(iii)$ guarantees that positive probability is only placed on densities that yield a finite mean function estimate, which is used for uniform integrability.  Condition $(i)$ is quite hard to show, but there has been recent work demonstrating weak consistency for a number of Dirichlet process mixture models and priors~\citep{GhGhRa99,GhRa03,AmGhGh03,To06}.  See the Appendix \ref{sec:main} for a proof of Theorem \ref{thm:generalConvergence}.

\subsection{Asymptotic Unbiasedness Example: Gaussian Model}\label{subsec:example}
Theorem \ref{thm:continuous} gives conditions for when Theorem \ref{thm:generalConvergence} holds for the Gaussian model.  We will then given some examples of when Theorem \ref{thm:continuous} holds.
\begin{theorem}\label{thm:continuous}
Suppose that:
\begin{enumerate}[(i)]
 \item $f_0(x,y)$ is absolutely continuous with respect to $\R^{d+1}$ and has compact support,
 \item all location-scale-slope parameter mixtures are in the weak support of $\G_0$, and
 \item $\G_0$ satisfies assumption $(iii)$ of Theorem \ref{thm:generalConvergence}.
\end{enumerate}Then, the conclusions of Theorem \ref{thm:generalConvergence} hold.
\end{theorem}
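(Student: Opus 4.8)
\smallskip
\noindent\textbf{Proof plan.} The plan is to deduce Theorem~\ref{thm:continuous} from Theorem~\ref{thm:generalConvergence} by verifying its three hypotheses for the Gaussian model of Equation~(\ref{eq:linear}), with $\mathcal{C}$ any compact subset of $\mathrm{supp}(f_0)$ on which the conditional density $f_0(y\g x)$ is well defined. Hypothesis~(iii) of Theorem~\ref{thm:generalConvergence} holds by assumption~(iii) of Theorem~\ref{thm:continuous}, which asserts exactly this, so there is nothing to prove. Hypothesis~(ii) is immediate from assumption~(i): since $f_0$ has compact support, for each fixed $x \in \mathcal{C}$ the conditional law $f_0(\cdot\g x)$ is supported on a bounded subset of $\R$, so $\int |y| f_0(y\g x)\,dy < \infty$. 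Everything therefore reduces to hypothesis~(i): showing that $f_0$ lies in the Kullback--Leibler support of the DP-GLM prior, i.e., that for every $\delta>0$, $\Pi^f$ gives positive mass to $\{f : \int f_0 \log(f_0/f)\,dx\,dy < \delta\}$.

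For this I would proceed in three steps. (1) Identify the structure of $\Pi^f$: integrating out the covariate Gaussians together with the linear-Gaussian response shows that each atom $\theta = (\mu_{1:d},\sigma_{1:d},\beta_{0:d},\sigma_y)$ induces a full-rank $(d{+}1)$-dimensional Gaussian density on $(x,y)$, so $\Pi^f$ is the law of a Dirichlet-process location--scale--slope mixture of such structured Gaussians. (2) Approximate $f_0$ in KL divergence by a finite mixture of this form: replace $f_0$ by a mild truncation of the mollification $f_0 * \phi_h$, which converges to $f_0$ in $L^1$ and, using compact support together with a regularity bound on $f_0$, in KL divergence as $h \to 0$; then discretize the associated mixing measure by a finitely supported measure on a compact set of location--scale--slope parameters, changing the KL divergence by an arbitrarily small amount. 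The affine component means cause no trouble: either take the $x$-bandwidths small enough that each affine mean is nearly constant on the effective support of its component, or simply set the slopes to zero, recovering ordinary narrow Gaussian bumps. (3) Invoke assumption~(ii): since every location--scale--slope mixture lies in the weak support of $\G_0$, the Dirichlet process charges every weak neighbourhood of the finitely supported mixing measure from step~(2), and a standard KL-support lemma for DP mixtures in the style of \citet{GhGhRa99,To06} upgrades this to positive $\Pi^f$-mass on the KL ball around $f_0$. With all three hypotheses in hand, Theorem~\ref{thm:generalConvergence} yields the claimed convergence $\p_{f_0^\infty}$-almost surely; because no conjugacy of $\G_0$ is used, this strictly generalizes the continuous-response results of \citet{RoDuGe09,Ro09}.

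The main obstacle is the KL part of step~(2): controlling $\int f_0 \log(f_0/f)$, not merely $\|f_0-f\|_1$, near the boundary of $\mathrm{supp}(f_0)$. A Gaussian mixture is everywhere positive, so the log-ratio is harmless on $\{f_0 = 0\}$; the danger is the thin region just inside the boundary where $f_0>0$ but the approximating mixture is already exponentially small, making $\log(f_0/f)$ large. Handling this requires carefully balancing the bandwidth $h$, the truncation radius, and the Gaussian tail decay, and it is precisely here that compact support of $f_0$ (assumption~(i) of Theorem~\ref{thm:continuous}) is essential, together with a regularity hypothesis guaranteeing $\int f_0 \log f_0 < \infty$ (e.g.\ $f_0$ bounded). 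Once these estimates are established, the remaining bookkeeping mirrors the existing weak-consistency proofs for location--scale Dirichlet process mixtures.
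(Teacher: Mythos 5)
Your proposal is correct and follows essentially the same route as the paper: conditions (ii) and (iii) of Theorem~\ref{thm:generalConvergence} are dispatched exactly as you say (compact support, respectively assumption), and the whole weight falls on the Kullback--Leibler condition (i), which the paper likewise establishes by a mollification-plus-weak-support argument in the style of \citet{GhGhRa99} (Theorems~\ref{thm:compact} and~\ref{thm:continuousCon}: a Gaussian convolution approximant $f_{0,h}$, a domination argument giving KL convergence as the bandwidths shrink, then uniform equicontinuity and Arzel\`a--Ascoli to pass from the weak support of $\G_0$ to positive prior mass on KL neighborhoods). Your variants---discretizing the mixing measure, or zeroing the slopes to reduce to the classical location--scale case---stay within that framework rather than constituting a different proof, and the $\int f_0\log f_0<\infty$ regularity you flag is indeed also implicitly required by the paper's dominating-function step.
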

See the Appendix \ref{sec:gaussian} for the proof.  Examples that satisfy Theorem \ref{thm:continuous} are as follows, with technical results in the Appendix.
\paragraph{Normal-Inverse-Wishart.} Note that in the Gaussian case, slope parameters can be generated by a full covariance matrix: using a conjugate prior, a Normal-Inverse-Wishart, will produce an instance of the DP-GLM.  Define the following model, which was used by \citet{MuErWe96},
\begin{align}\label{eq:inverseWish}
P & \sim DP(\alpha \G_0),\\\notag
\theta_i \g P & \sim P,\\\notag
(X_i,Y_i) \g \theta_i & \sim N(\mu, \Sigma).
\end{align}The last line of Model (\ref{eq:inverseWish}) can be broken down in the following manner,
\begin{align}\notag
X_i \g \theta_i & \sim N\left(\mu_x, \Sigma_x\right),\\\notag
Y_i \g \theta_i & \sim N\left(\mu_y + b^T \Sigma_x^{-1} b(X_i-\mu_x), \sigma_y^2 - b^T \Sigma_x^{-1} b\right),
\end{align}where
\begin{align}\notag
\mu & = 
\left[
\begin{array}{c}
  \mu_y \\
  \mu_x 
\end{array}
\right], & \Sigma & = 
\left[
\begin{array}{ccc}
 \sigma_y^2 &   b^T \\
  b & \Sigma_x     
\end{array}
\right].
\end{align}
We can then define $\beta$ as,
\begin{align}\notag
\beta_0 & = \mu_y -b^T \Sigma_x^{-1}\mu_x, & \beta_{1:d} & = b^T \Sigma_x^{-1}.
\end{align}
The base measure $\G_0$ is defined as,
\begin{equation}\notag
(\mu,\Sigma) \sim Normal \ Inverse \ Wishart(\mathbf{\lambda},\nu,a,B).
\end{equation}Here $\lambda$ is a mean vector, $\nu$ is a scaling parameter for the mean, $a$ is a scaling parameter for the covariance, and $B$ is a covariance matrix.

\paragraph{Diagonal Normal-Inverse-Gamma.}It is often more computationally efficient to specify that $\Sigma_x$ is a diagonal matrix.  In this case, we can specify a conjugate base measure component by component:
\begin{align}\notag
\sigma_{i,j} & \sim Inverse \ Gamma(a_{j},b_j), & j = 1,\dots,d,\\\notag
\mu_{i,j} \g \sigma_{i,j} & \sim N(\lambda_j,\sigma_{i,j}/\nu_j), & j = 1,\dots,d,\\\notag
\sigma_{i,y} & \sim Inverse \ Gamma(a_y,b_y), \\\notag
\beta_{i,j} \g \sigma_{i,y} & \sim N_{d+1}(\mathbf{\lambda}_y,\sigma_y/\nu_y).
\end{align}The Gibbs sampler can still be collapsed, but the computational cost is much lower than the full Normal-Inverse-Wishart.

\paragraph{Normal Mean, Log Normal Variance.}Conjugate base measures tie the mean to the variance and can be a poor fit for small, heteroscedastic data sets.  The following base measure was proposed \citet{ShNe07},
\begin{align}\notag
\log(\sigma_{i,j}) & \sim N(m_{j,\sigma},s^2_{j,\sigma}), & j = y,1,\dots,d,\\\notag
\mu_{i,j} & \sim N(m_{j,\mu},s_{j,\mu}^2), & j = 1,\dots,d,\\\notag
\beta_{i,j} & \sim N(m_{j,\beta},s_{j,\beta}^2) & j = 0,\dots,d.
\end{align}

\subsection{Asymptotic Unbiasedness Example: Multinomial Model}\label{subsec:example2}
Now consider the multinomial model of \citet{ShNe07}, given in Model (\ref{eq:multinomial}),
\begin{align}\notag
  P & \sim DP(\alpha\G_0),\\\notag
  \theta_i | P& \sim P, \\\notag
  X_{i,j} | \theta_{i,x}&\sim N\left(\mu_{ij}, \sigma_{ij}^2\right), & j = 1,\dots,d,\\\notag
  \p(Y_i=k | X_i, \theta_{i,y}) & = \frac{\exp\left(\beta_{i,0,k}+\sum_{j=1}^d\beta_{i,j,k} X_{i,j}\right)}{\sum_{\ell=1}^K \exp\left(\beta_{i,0,\ell}+\sum_{j=1}^d\beta_{i,j,\ell} X_{i,j}\right)}, & k = 1,\dots,K
\end{align} 
Theorem \ref{thm:multinomial} gives conditions for when Theorem \ref{thm:generalConvergence} holds for the Multinomial model.  We will then given some examples of when Theorem \ref{thm:multinomial} holds.
\begin{theorem}\label{thm:multinomial}
Suppose that:
\begin{enumerate}[(i)]
 \item $f_0(x)$ is absolutely continuous with respect to $\R^{d}$ and has compact support,
 \item $\p(Y=k\g x)$ is continuous in $x$ for every $x \in \mathcal{C}$ and $k = 1,\dots, K$, 
 \item all location-scale-slope parameter mixtures are in the weak support of $\G_0$, and
 \item $\G_0$ satisfies assumption $(iii)$ of Theorem \ref{thm:generalConvergence}.
\end{enumerate}Then, the conclusions of Theorem \ref{thm:generalConvergence} hold.
\end{theorem}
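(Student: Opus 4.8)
The plan is to verify the three hypotheses of Theorem~\ref{thm:generalConvergence} for the model of Equation~(\ref{eq:multinomial}), following the same route as the proof of Theorem~\ref{thm:continuous} but with the Gaussian response replaced by a categorical response under a multinomial--logistic link. Since $Y$ takes values in $\{1,\dots,K\}$ it is bounded, so hypotheses $(ii)$ and $(iii)$ of Theorem~\ref{thm:generalConvergence} are immediate: for every $x \in \mathcal{C}$, $\int |y| f_0(y \g x)\, dy = \sum_{k=1}^{K} k\, \p_0(k \g x) \le K < \infty$, and likewise $\int \int |y|^{1+\epsilon} f_y(y \g x,\theta)\, \G_0(d\theta) \le \sum_{k=1}^{K} k^{1+\epsilon} < \infty$ for any $\G_0$, so hypothesis $(iv)$ is automatic and is presumably listed only for parallelism with the other theorems. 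Hence the whole content reduces to verifying hypothesis $(i)$: that for every $\delta > 0$ the prior $\Pi^f$ charges the Kullback--Leibler ball around the true joint density $f_0(x,y) = f_0(x)\, \p_0(y \g x)$.

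First I would decompose the KL divergence by the chain rule:
\begin{equation}\notag
\int f_0(x,y) \log \frac{f_0(x,y)}{f(x,y)}\, dx\, dy
= \int f_0(x) \log \frac{f_0(x)}{f^X(x)}\, dx
+ \int f_0(x) \sum_{k=1}^{K} \p_0(k \g x) \log \frac{\p_0(k \g x)}{\p_f(k \g x)}\, dx ,
\end{equation}
where $f^X$ and $\p_f(\cdot \g x)$ are the covariate marginal and the induced conditional of a density $f$ in the support of $\Pi^f$; it then suffices to make both terms smaller than $\delta/2$ with positive prior probability. For the marginal term I would invoke the known weak-consistency theory for Dirichlet process location--scale mixtures of Gaussians~\citep{GhGhRa99,To06}: by hypothesis $(i)$, $f_0(x)$ is compactly supported and absolutely continuous, hence approximable in the relevant sense by a finite mixture of narrow Gaussians placed on a fine grid over its support, and by hypothesis $(iii)$ every finitely supported mixing measure on location--scale--slope parameters lies in the weak support of the DP prior.

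For the conditional term I would use hypothesis $(ii)$: on the compact set $\mathcal{C}$ each map $x \mapsto \p_0(k \g x)$ is uniformly continuous, so after refining the grid above I can attach to the $j$-th narrow Gaussian component a multinomial--logistic parameter $\beta_j$ (for instance, intercepts chosen so that the logistic probabilities match $\p_0(\cdot \g x)$ at the $j$-th cell centre, with the slopes taken small) so that the induced conditional $\p_f(k \g x)$ is within $\delta$ of $\p_0(k \g x)$ uniformly over $\mathcal{C}$ outside a set of arbitrarily small $f_0$-measure; cells on which $\p_0(k \g x) \in \{0,1\}$ cause no difficulty, since logistic probabilities can be pushed arbitrarily close to $0$ or $1$. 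Combining the two constructions produces a single finitely supported mixing measure $P^{\star}$ on $(\mu,\Sigma,\beta)$-space whose induced joint density is within $\delta$ of $f_0$ in KL divergence, and by hypothesis $(iii)$ every weak neighbourhood of $P^{\star}$ has positive DP-probability. Once hypothesis $(i)$ is established this way, Theorem~\ref{thm:generalConvergence} yields the conclusion, and the argument specializes to the classification model of \citet{ShNe07}.

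The hard part will be the transfer step: showing that a suitable weak neighbourhood of $P^{\star}$ maps into a Kullback--Leibler neighbourhood of $f_0$. As in the Ghosal--Ghosh--Ramamoorthi argument underlying Theorem~\ref{thm:continuous}, one must control $\log(f_0/f_P)$ on $\{f_0 > 0\}$, which for the covariate marginal requires a diffuse ``blanket'' Gaussian component of small weight so that $f^X_P$ stays bounded below on $\mathcal{C}$ (using compact support of $f_0$ to bound $\int f_0 \log f_0$), and for the conditional requires that the logistic probabilities $\p_{f_P}(k \g x)$ remain bounded away from $0$ and $1$ on $\mathcal{C}$, uniformly over the whole weak neighbourhood, even though $\p_{f_P}(k \g x)$ aggregates contributions from spatially distant mixture components whose $\beta$-parameters range over an unbounded set. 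Everything else is a routine adaptation of the corresponding steps already carried out for the Gaussian model.
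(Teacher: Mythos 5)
Your reduction is sound up to a point: since $Y\in\{1,\dots,K\}$ is bounded, conditions $(ii)$ and $(iii)$ of Theorem~\ref{thm:generalConvergence} are indeed automatic, and the entire content of the theorem is the Kullback--Leibler support condition $(i)$ --- this matches the paper. Your route to $(i)$ (chain-rule split into marginal and conditional KL terms, a finite grid mixture with logistic coefficients attached to each cell, then a transfer from a weak neighborhood of $P^{\star}$ to a KL neighborhood) differs from the paper's, which follows \citet{GhGhRa99}: it first replaces $f_0(x,k)$ by the Gaussian convolution $f_{0,h}(x,k)=\int f_0(\mu,k)\phi_h(x-\mu)\,d\mu$ (Theorem~\ref{thm:multiBounded}) and then compares $f_{0,h}$ with the logistic-kernel mixtures $f_{h,P}$. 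The difference of decomposition is not itself a problem; either could be made to work.

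The genuine gap is that you defer as ``the hard part'' exactly the step that constitutes the paper's proof of Theorem~\ref{thm:multiCon}: converting weak support of $\G_0$ into positive probability of KL neighborhoods, which requires the induced conditional probabilities to stay bounded away from $0$ and $1$ and the kernel family to be equicontinuous over a \emph{compact} parameter set. The paper achieves this with a two-stage device you do not supply: first truncate the target conditional to $f_\gamma(k\g x)\in[\gamma,1-\gamma]$, at the cost of a small multiplicative error; then, because the truncated probabilities avoid the extremes and $\p_0(k\g x)$ is continuous on a compact domain (your hypothesis $(ii)$), represent them approximately in logistic form with coefficients confined to a fixed box $[-b_0,b_0]^K\times[-b_1,b_1]^K$ (the $g_\delta$ construction), after which the kernels $\phi_h(x-\mu)\exp(\beta_{0,k}+\beta_{1,k}x)/\sum_j\exp(\beta_{0,j}+\beta_{1,j}x)$ are uniformly equicontinuous and the Arzel\`a--Ascoli finite-tiling argument of Theorem~\ref{thm:continuousCon} applies. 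Your proposal points in the opposite direction: handling cells where $\p_0(k\g x)\in\{0,1\}$ by ``pushing logistic probabilities arbitrarily close to $0$ or $1$'' forces unbounded $\beta$'s, which destroys precisely the compactness and uniform control that the weak-neighborhood-to-KL transfer needs --- a tension you acknowledge in your final paragraph but do not resolve. (Where $\p_0(k\g x)=0$ the KL integrand vanishes, so the real danger is the model's probabilities collapsing toward $0$ where $\p_0>0$; the truncation, not parameter inflation, is what controls this.) Without that construction, the deferred step is not a routine adaptation of the Gaussian case.
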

See Appendix \ref{sec:multi} for the proof.  We now give some examples of base measures that satisfy Theorem \ref{thm:multinomial}.  In general, the base measure for the continuous covariates is the same as those in the Gaussian model, while the GLM parameters are given a Gaussian base measure.  Technical results are discussed in the Appendix \ref{sec:multi}.

\paragraph{Normal-Inverse-Wishart.}The covariates have a Normal-Inverse-Wishart base measure while the GLM parameters have a Gaussian base measure,
\begin{align}\notag
(\mu_{i,x},\Sigma_{i,x}) & \sim Normal \ Inverse \ Wishart (\mathbf{\lambda},\nu,a,B),\\\notag
\beta_{i,j,k} & \sim N(m_{j,k},s_{j,k}^2), & j = 0,\dots,d, \ \ k = 1,\dots,K.
\end{align}

\paragraph{Diagonal Normal-Inverse-Gamma.}It is often more computationally efficient to specify that $\Sigma_x$ is a diagonal matrix.  Again, we can specify a conjugate base measure component by component while keeping the Gaussian base measure on the GLM components,
\begin{align}\notag
\sigma_{i,j} & \sim Inverse \ Gamma(a_{j},b_j), & j = 1,\dots,d,\\\notag
\mu_{i,j} \g \sigma_{i,j} & \sim N(\lambda_j,\sigma_{i,j}/\nu_j), & j = 1,\dots,d,\\\notag
\beta_{i,j,k} \g \sigma_{i,y} & \sim N(m_{j,k},s_{j,k}^2), & j=0,\dots,d, \ \ k = 1,\dots,K.
\end{align}

\paragraph{Normal Mean, Log Normal Variance.}Likewise, for heteroscedastic covariates we can use the log normal base measure of \citet{ShNe07},
\begin{align}\notag
\log(\sigma_{i,j}) & \sim N(m_{j,\sigma},s^2_{j,\sigma}), & j = 1,\dots,d,\\\notag
\mu_{i,j} & \sim N(m_{j,\mu},s_{j,\mu}^2), & j = 1,\dots,d,\\\notag
\beta_{i,j,k} & \sim N(m_{j,k,\beta},s_{j,k,\beta}^2) & j = 0,\dots,d, \ \ k = 1,\dots, K.
\end{align}

\section{Empirical study}\label{sec:numbers}
We compare the performance of DP-GLM regression to other regression
methods. We studied data sets that illustrate the strengths of the
DP-GLM, including robustness with respect to data type,
heteroscedasticity and higher dimensionality than can be approached
with traditional methods.  \citet{ShNe07} used a similar model on data
with categorical covariates and count responses; their numerical
results were encouraging.  We tested the DP-GLM on the following
datasets.

\paragraph{Datasets.}
We selected three data sets with continuous response variables. They highlight various data difficulties within regression, such as error heteroscedasticity, moderate dimensionality (10--12 covariates), various input types and response types.
\begin{itemize}
\item \textbf{Cosmic Microwave Background (CMB) \cite{Be03}.} The data set consists of 899 observations which map positive integers $\ell = 1,2, \dots, 899$, called `multipole moments,' to the power spectrum $C_{\ell}$. Both the covariate and response are considered continuous. The data pose challenges because they are highly nonlinear and heteroscedastic. Since this data set is only two dimensions, it allows us to easily demonstrate how the various methods approach estimating a mean function while dealing with non-linearity and heteroscedasticity.
\item \textbf{Concrete Compressive Strength (CCS) \cite{Ye98}.} The data set has eight covariates: the components cement, blast furnace slag, fly ash, water, superplasticizer, coarse aggregate and fine aggregate, all measured in $kg$ per $m^3$, and the age of the mixture in days; all are continuous. The response is the compressive strength of the resulting concrete, also continuous. There are 1,030 observations. The data have relatively little noise. Difficulties arise from the moderate dimensionality of the data.
\item \textbf{Solar Flare (Solar) \cite{Br89}.} The response is the number of solar flares in a 24 hour period in a given area; there are 11 categorical covariates. 7 covariates are binary and 4 have 3 to 6 classes for a total of 22 categories. The response is the sum of all types of solar flares for the area. There are 1,389 observations. Difficulties are created by the moderately high dimensionality, categorical covariates and count response. Few regression methods can appropriately model this data.
\end{itemize}Dataset testing sizes ranged from very small (20 observations) to moderate sized (800 observations).  Small dataset sizes were included due to interests in (future) online applications.

\paragraph{Competitors.}
The competitors represent a variety of regression methods; some methods are only suitable for certain types of regression problems.
\begin{itemize}
\item \textbf{Ordinary Least Squares (OLS).} A parametric method that often provides a reasonable fit when there are few observations. Although OLS can be extended for use with any set of basis functions, finding basis functions that span the true function is a difficult task. We naively choose $[1 \, X_1 \, \dots \, X_d]^T$ as basis functions. OLS can be modified to accommodate both continuous and categorical inputs, but it requires a continuous response function.
\item \textbf{CART.} A nonparametric tree regression method generated by the \textsf{Matlab} function \textsf{classregtree}. It accommodates both continuous and categorical inputs and any type of response.
\item \textbf{Bayesian CART.} A tree regression model with a prior over tree size~\citep{ChGeMc98}; it was implemented in \textsf{R} with the \textsf{tgp} package.
\item \textbf{Bayesian Treed Linear Model.} A tree regression model with a prior over tree size and a linear model in each of the leaves~\citep{ChGeMc02}; it was implemented in \textsf{R} with the \textsf{tgp} package.
\item \textbf{Gaussian Processes (GP).} A nonparametric method that can accommodate only continuous inputs and continuous responses. GPs were generated in \textsf{Matlab} by the program \textsf{gpr} of \cite{RaWi06}. It is suitable only for continuous responses and covariates.
\item \textbf{Treed Gaussian Processes.} A tree regression model with a prior over tree size and a GP on each leaf node~\citep{GrLe08}; it was implemented in \textsf{R} with the \textsf{tgp} package.
\item \textbf{Basic DP Regression.} Similar to DP-GLM, except the response is a function only of $\mu_y$, rather than $\beta_0 + \sum \beta_i x_i.$  For the Gaussian model,
\begin{align}\notag
P & \sim DP(\alpha \G_0),\\\notag
\theta_i|P & \sim P,\\\notag
X_i | \theta_i & \sim N(\mu_{i,x},\sigma_{i,x}^2),\\\notag
Y_i|\theta_i & \sim N(\mu_{i,y},\sigma_{i,y}^2.)
\end{align}This model was explored in Section \ref{sec:dpRegular}.
\item \textbf{Poisson GLM (GLM).} A Poisson generalized linear model, used on the Solar Flare data set. It is suitable for count responses.
\end{itemize}

\paragraph{Cosmic Microwave Background (CMB) Results.}
For this dataset, we used a Guassian model with base measure
\begin{align}\notag
\mu_x & \sim N(m_x, s_x^2), & \sigma_x^2 & \sim \exp\left\{ N(m_{x,s}, s_{x,s}^2)\right\},\\\notag
\beta_{0:d} & \sim N(m_{y,0:d}, s_{y,0:d}^2), & \sigma_y^2 & \sim \exp\left\{ N(m_{x,s}, s_{x,s}^2)\right\}.
\end{align}This prior was chosen because the variance tails are heavier than an inverse gamma and the mean is not tied to the variance.  It is a good choice for heterogeneous data because of those features.  Computational details are given in Appendix \ref{sec:cmb}.

All non-linear methods except for CART (DP-GLM, Bayesian CART, treed linear models, GPs and treed GPs) did comparably on this dataset; CART had difficulty finding an appropriate bandwidth.  Linear regression did poorly due to the non-linearity of the dataset.  Fits for heteroscedasticity for the DP-GLM, GPs, treed GPs and treed linear models on 250 training data points can be seen in Figure \ref{fig:cmbHetero}.  See Figure \ref{fig:cmb} and Table \ref{tab:cmb} for results.

\begin{figure}
  \begin{center}
    \includegraphics*[width=6in]{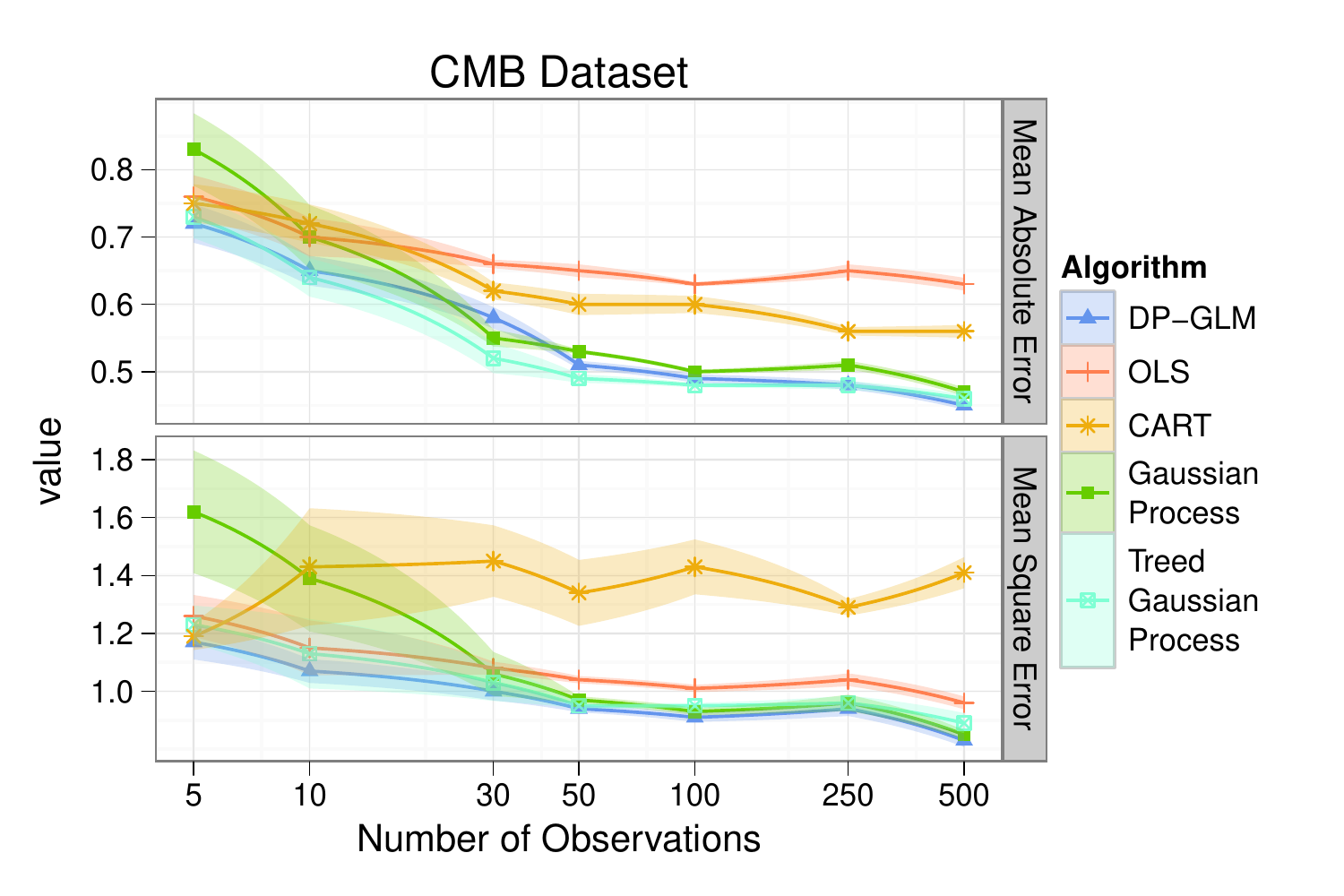}
  \end{center}
  \caption{The average mean absolute error (top) and mean squared
    error (bottom) for ordinary least squares (OLS), tree
    regression, Gaussian processes and DP-GLM on the CMB data
    set. The data were normalized. Mean $+/-$ one standard deviation
    are given for each method.}\label{fig:cmb}
\end{figure}


\begin{table}
\begin{sc}
\small{
\begin{center}
\begin{tabular}{l | c c c c c | c c c c c }\hline
Method & \multicolumn{5}{c|}{Mean Absolute Error}& \multicolumn{5}{c}{Mean Square Error}  \\
\textit{Training set size} & 30 & 50 & 100 & 250 & 500& 30 & 50 & 100 & 250 & 500\\\hline
DP-GLM & 0.58 & 0.51 & 0.49 & {\bf0.48} & {\bf0.45}
& {\bf1.00} & {\bf0.94} & {\bf0.91} & {\bf0.94} & {\bf0.83}\\
Linear Regression & 0.66 & 0.65 & 0.63 & 0.65 & 0.63
& 1.08& 1.04 & 1.01 & 1.04 & 0.96\\
CART & 0.62 & 0.60 & 0.60 & 0.56 & 0.56
& 1.45 & 1.34 & 1.43 & 1.29 & 1.41\\
Bayesian CART & 0.66 & 0.64 & 0.54 & 0.50 & 0.47 
& 1.04 & 1.01 & 0.93 & {\bf0.94} & 0.84\\
Treed Linear Model &0.64 & 0.52 & 0.49 & 0.48 & 0.46 
& 1.10 & 0.95 & 0.93 & 0.95 & 0.85 \\
Gaussian Process & 0.55 & 0.53 & 0.50 & 0.51 & 0.47 
& 1.06 & 0.97 & 0.93 & 0.96 & 0.85\\
Treed GP & {\bf0.52} & {\bf0.49} & {\bf0.48} & {\bf0.48} & 0.46
& 1.03 & 0.95 & 0.95 & 0.96 & 0.89 \\
\hline
\end{tabular}
\end{center}
}\normalsize
\end{sc}
\caption{Mean absolute and square errors for methods on the CMB data set by training data size.  The best results for each size of training data are in bold.}
\label{tab:cmb}
\end{table}

\paragraph{Concrete Compressive Strength (CCS) Results.}
The CCS dataset was chosen because of its moderately high
dimensionality and continuous covariates and response.  For this
dataset, we used a Gaussian model and a conjugate base measure with
conditionally independent covariate and response parameters,
\begin{align}\notag
(\mu_x,\sigma^2_x) & \sim Normal-Inverse-Gamma(m_x,s_x,a_x,b_x),\\\notag
(\beta_{0:d},\sigma_y^2) & \sim Multivariate \ Normal-Inverse-Gamma(M_y, S_y, a_y,b_y).
\end{align}This base measure allows the sampler to be fully collapsed but has fewer covariate-associated parameters than a full Normal-Inverse-Wishart base measure, giving it a better fit in a moderate dimensional setting.  In testing, it also provided better results for this dataset than the exponentiated Normal base measure used for the CMB dataset; this is likely due to the low noise and variance of the CCS dataset.  Computational details are given in Appendix \ref{sec:ccs}.

Results on this dataset were more varied than those for the CMB
dataset.  GPs had the best performance overall; on smaller sets of
training data, the DP-GLM outperformed frequentist CART.  Linear
regression, basic DP regression and Bayesian CART all performed
comparatively poorly.  Treed linear models and treed GPs performed
very well most of the time, but had convergence problems leading to
overall higher levels of predictive error.  Convergence issues were
likely caused by the moderate dimensionality (8 covariates) of the
dataset.  See Figure \ref{fig:ccs} and Table \ref{tab:ccs} for
results.

\begin{figure}
  \begin{center}
    \includegraphics*[width=6in]{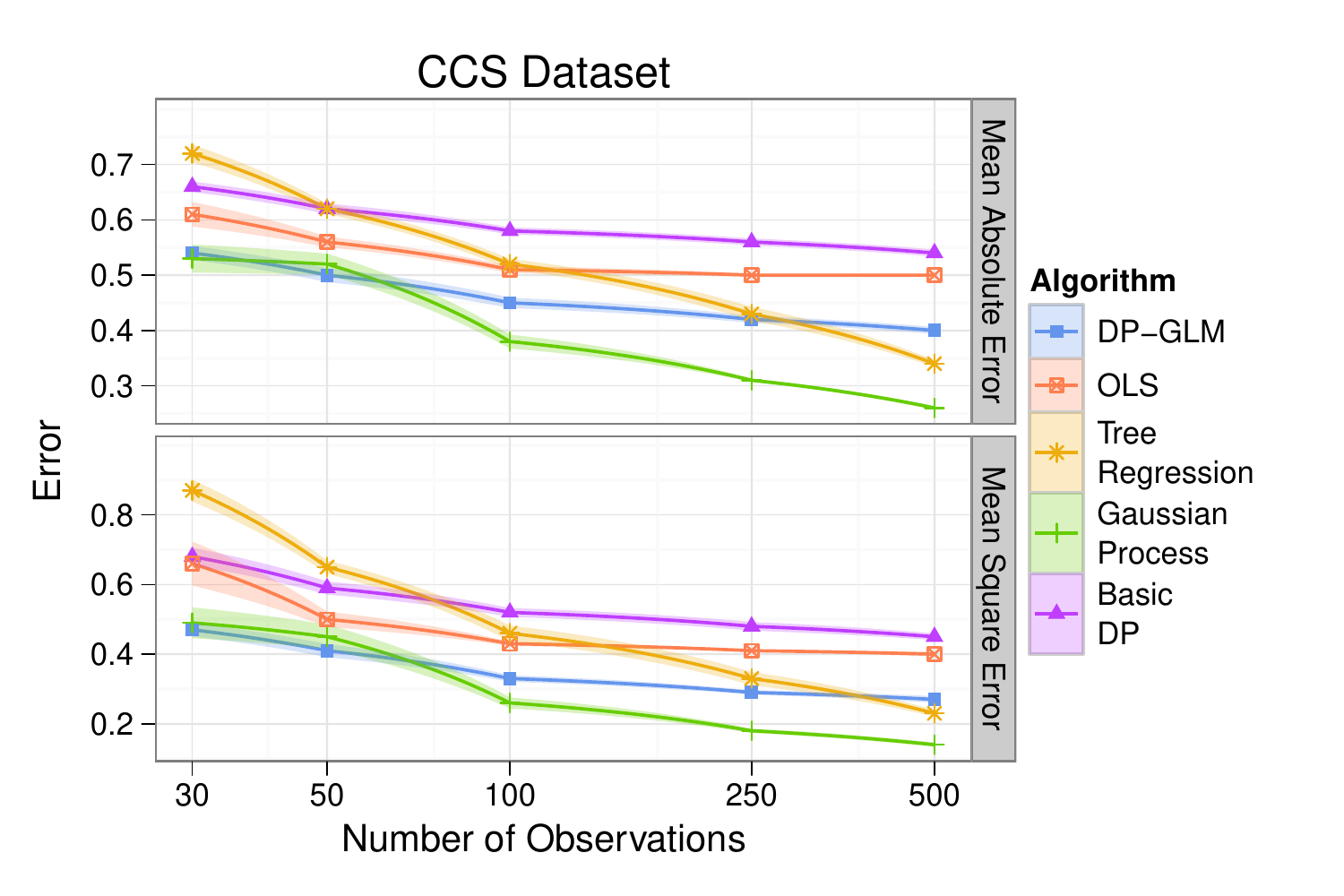}
  \end{center}
    \caption{The average mean absolute error (top) and mean squared
      error (bottom) for ordinary least squares (OLS), tree
      regression, Gaussian processes, location/scale DP and the DP-GLM
      Poisson model on the CCS data set. The data were
      normalized. Mean $+/-$ one standard deviation are given for each
      method.}\label{fig:ccs}
  \end{figure}

\begin{table}
\begin{sc}
\small{
\begin{center}
\begin{tabular}{l | c c c c c | c c c c c }\hline
Method & \multicolumn{5}{c|}{Mean Absolute Error}& \multicolumn{5}{c}{Mean Square Error}  \\
 & 30 & 50 & 100 & 250 & 500& 30 & 50 & 100 & 250 & 500\\\hline
DP-GLM & {\bf0.54}& $0.50 $ & $0.45$ & $0.42$ & $0.40 $
& {\bf0.47}& $0.41 $ & $0.33 $ & $0.28 $ & $0.27 $\\
Location/Scale DP & 0.66 & 0.62 & 0.58 & 0.56 & 0.54 
& 0.68 & 0.59 & 0.52 & 0.48 & 0.45\\
Linear Regression & 0.61 & 0.56 & 0.51 & 0.50 & 0.50 
& 0.66 & 0.50 & 0.43 & 0.41 & 0.40\\
CART & 0.72 & 0.62 & 0.52 & 0.43 & 0.34
& 0.87 & 0.65 & 0.46 & 0.33 & 0.23\\
Bayesian CART &0.78 & 0.72 & 0.63 & 0.55 & 0.54 
& 0.95 & 0.80 & 0.61 & 0.49 & 0.46 \\
Treed Linear Model &1.08 & 0.95 & 0.60 & 0.35 & 1.10
& 7.85 & 9.56 & 4.28 & 0.26 & 1232 \\
Gaussian Process & 0.53 & 0.52 & {\bf0.38} & 0.31 & 0.26 
& 0.49 & 0.45 & {\bf0.26} & 0.18 & 0.14\\
Treed GP & 0.73 & {\bf0.40} & 0.47 & {\bf0.28} & {\bf0.22}
& 1.40 & {\bf0.30} & 3.40 & {\bf0.20} & {\bf0.11} \\
\hline
\end{tabular}
\end{center}
}\normalsize
\end{sc}
\caption{Mean absolute and square errors for methods on the CCS data set by training data size.  The best results for each size of training data are in bold.}
\label{tab:ccs}
\end{table}%

\paragraph{Solar Flare Results.}
The Solar dataset was chosen to demonstrate the flexibility of
DP-GLM. Many regression techniques cannot accommodate categorical
covariates and most cannot accommodate a count-type response.  For
this dataset, we used the following DP-GLM,
\begin{align}\notag
P & \sim DP(\alpha \G_0),\\\notag
\theta_i \g P & \sim P,\\\notag
X_{i,j} \g \theta_i & \sim (p_{i,j,1},\dots,p_{i,j,K(j)}),\\\notag
Y_i \g \theta_i & \sim Poisson\left(\beta_{i,0} + \sum_{j=1}^d \sum_{k = 1}^{K(j)} \beta_{i,j,k} \1_{\{X_{i,j} = k\}}\right). 
\end{align}We used a conjugate covariate base measure and a Gaussian base measure for $\beta$,
\begin{align}\notag
(p_{j,1},\dots,p_{j,K(j)}) & \sim Dirichlet(a_{j,1},\dots,a_{j,K(j)}), & \beta_{j,k} & \sim N(m_{j,k},s_{j,k}^2).
\end{align}Computational details are given in Appendix \ref{sec:solar}.

The only other methods that can handle this dataset are CART,
Bayesian CART and Poisson regression.  The DP-GLM had the best
performance under both error measures (with Bayesian CART a close
second).  The high mean squared error values suggests that frequentist
CART overfit while the high mean absolute error for Poisson regression
suggests that it did not adequately fit nonlinearities.  See Figure
\ref{fig:solar} and Table \ref{tab:solar} for results.

\begin{figure}
\begin{center}
\includegraphics*[width=6in]{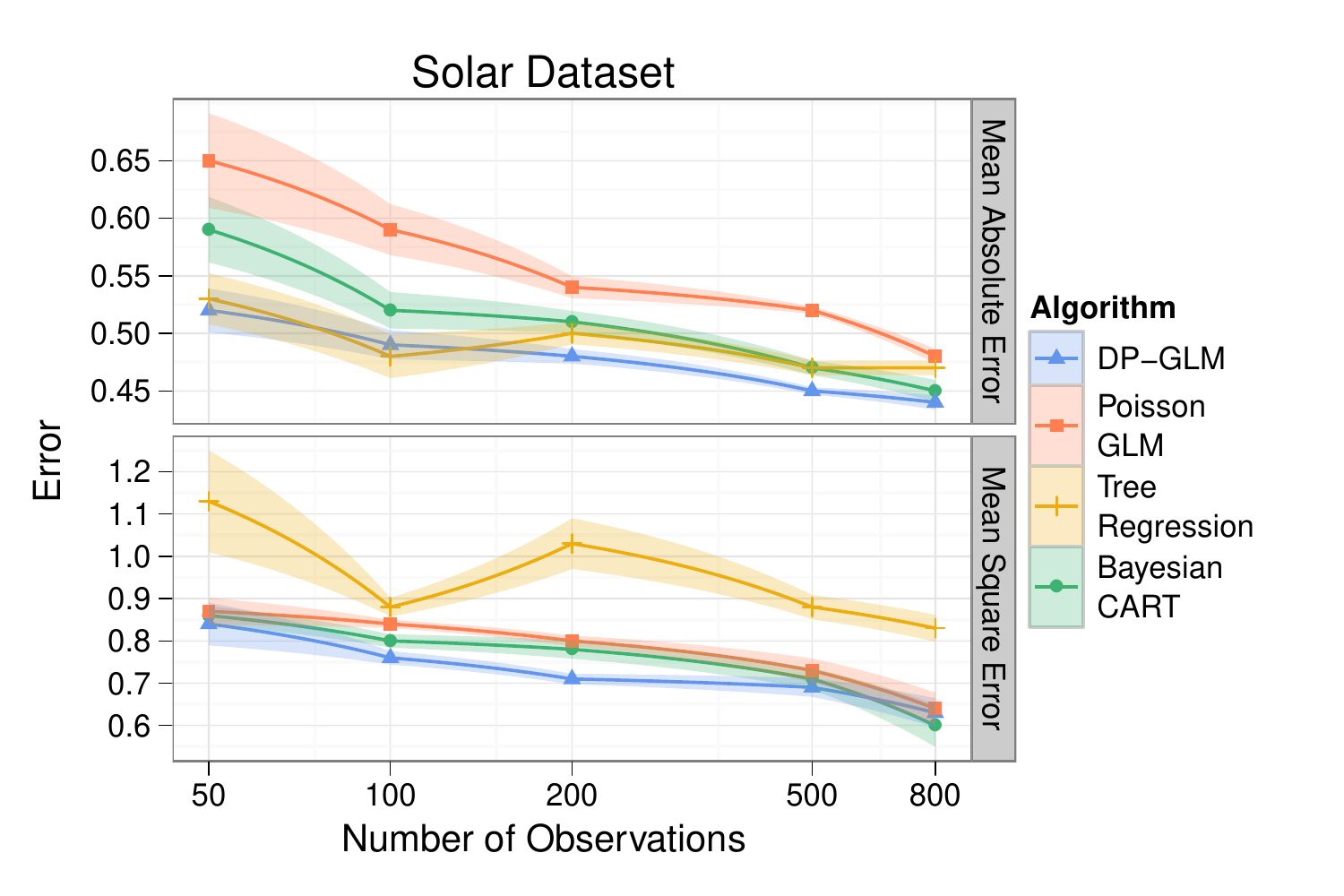}
\end{center}
\caption{The average mean absolute error (top) and mean squared error (bottom) for tree regression, a Poisson GLM (GLM) and DP-GLM on the Solar data set. Mean $+/-$ one standard deviation are given for each method.}\label{fig:solar}
\end{figure}

\begin{table}
\begin{sc}
\small{
\begin{center}
\begin{tabular}{l | c c c c c | c c c c c }\hline
Method & \multicolumn{5}{c|}{Mean Absolute Error}& \multicolumn{5}{c}{Mean Square Error}  \\
 & 50 & 100 & 200 & 500 & 800& 50 & 100 & 200 & 500 & 800\\\hline
DP-GLM & {\bf0.52} & 0.49 & {\bf0.48} & {\bf0.45} & {\bf0.44}
& {\bf0.84} & {\bf0.76} & {\bf0.71} & {\bf0.69} & 0.63\\
Poisson Regression & 0.65 & 0.59 & 0.54 & 0.52 & 0.48
& 0.87 & 0.84 & 0.80 & 0.73 & 0.64\\
CART & 0.53 & {\bf0.48} & 0.50 & 0.47 & 0.47
& 1.13 & 0.88 & 1.03 & 0.88 & 0.83\\
Bayesian CART & 0.59 & 0.52 & 0.51 & 0.47 & 0.45 
& 0.86 & 0.80 & 0.78 & 0.71 & {\bf0.60} \\
\hline
\end{tabular}
\end{center}
}\normalsize
\end{sc}
\caption{Mean absolute and square errors for methods on the Solar data set by training data size.  The best results for each size of training data are in bold.}
\label{tab:solar}
\end{table}

\paragraph{Discussion.}
The DP-GLM is a relatively strong competitor on all of the datasets,
but only CART and Bayesian CART match its flexibility.  It was more
stable than most of its Bayesian competitors (aside from GPs) on the
CCS dataset.  Our results suggest that the DP-GLM would be a good
choice for small sample sizes when there is significant prior
knowledge; in those cases, it acts as an automatic outlier detector
and produces a result that is similar to a Bayesian GLM.  Results from
Section \ref{sec:dpglm} suggest that the DP-GLM is not appropriate
for problems with high dimensional covariates; in those cases, the
covariate posterior swamps the response posterior with poor numerical
results.


\section{Conclusions and Future Work}\label{sec:discussion}

We developed the Dirichlet process mixture of generalized linear
models (DP-GLM), a flexible Bayesian regression technique.  We
discussed its statistical and empirical properties; we gave conditions for
asymptotic unbiasedness and gave situations in which they hold;
finally, we tested the DP-GLM on a variety of datasets against state
of the art Bayesian competitors.  The DP-GLM was competitive in most setting and provided stable, conservative estimates, even with extremely small sample sizes.

One concern with the DP-GLM is computational efficiency as implemented.  All results were generated using MCMC, which does not scale well to large datasets.  An alternative implementation using variational inference~\citep{BlJo05}, possibly online variational inference~\citep{Sa01}, would greatly increase computational feasibility for large datasets.  

Our empirical analysis of the DP-GLM has implications for regression
methods that rely on modeling a joint posterior distribution of the
covariates and the response.  Our experiments suggest that the
covariate posterior can swamp the response posterior, but careful
modeling can mitigate the effects for problems with low to moderate
dimensionality.  A better understanding would allow us to know when
and how such modeling problems can be avoided.


\renewcommand{\theequation}{A-\arabic{equation}}
\setcounter{equation}{0}  
\renewcommand{\thesubsection}{A-\arabic{subsection}}
\setcounter{subsection}{0}
\renewcommand{\thetheorem}{A-\arabic{theorem}}
\setcounter{theorem}{0}
\section*{Appendix}  
\subsection{Posterior Inference}\label{sec:inference}
In the Gibbs sampler, the state is the collection of labels $(z_1,\dots,z_n)$ and parameters $(\theta^*_1,\dots,\theta^*_K)$, where $\theta_c^*$ is the parameter associated with cluster $c$ and $K$ is the number of unique labels given $z_{1:n}$.  In a collapsed Gibbs sampler, all or part of $(\theta^*_1,\dots,\theta^*_K)$ is eliminated through integration.  Let $z_{-i} = (z_1,\dots,z_{i-1},z_{i+1},\dots,z_n)$.  A basic inference algorithm is given in Algorithm \ref{alg:inference}
\begin{algorithm}[h]\caption{Gibbs Sampling Algorithm for the DP-GLM}
  \begin{algorithmic}[1]\label{alg:inference}
  \REQUIRE Starting state $(z_1,\dots,z_n)$, $(\theta^*_1,\dots,\theta^*_K)$, convergence criteria.
  \REPEAT
  \FOR{$i=1$ to $n$}
  \STATE Sample $z_i$ from $p(z_i \g D, z_{-i},\theta^*_{1:K})$.
  \ENDFOR
  \FOR{$c = 1$ to $K$}
  \STATE Sample $\theta_c^*$ given $\{(X_i,Y_i) \, : \,  z_i = c\}$.
  \ENDFOR
  \IF{Convergence criteria are met}
  \STATE Record $(z_1,\dots,z_n)$ and $(\theta^*_1,\dots,\theta^*_K)$.
  \ENDIF
  \UNTIL{$M$ posterior samples obtained.}
  \end{algorithmic}
\end{algorithm}
Convergence criteria for our numerical examples are given in Appendix sections \ref{sec:cmb} to \ref{sec:solar}.  See \citet{GeCaSt04} for a more complete discussion on convergence criteria.

We can sample from the distribution $p(z_i \g D, z_{-i},\theta^*_{1:K})$ as follows,
\begin{align}\label{eq:pOfZ}
p(z_i \g D, z_{-i},\theta_{1:K}^*) & \propto p(z_i \g z_{-i}) p(X_i \g z_{1:n}, D, \theta^*_{1:K}) p(Y_i \g X_i, z_{1:n}, D, \theta^*_{1:K}).
\end{align}The first part of Equation (\ref{eq:pOfZ}) is the Chinese Restaurant Process posterior value,
\begin{equation}\notag
p(z_i\g z_{-i} ) = \left\{
\begin{array}{ll}
 \frac{n_{z_j}}{n-1+\alpha} &  \mathrm{if \ } z_i = z_j \ \mathrm{for \ some \ } j \neq i,  \\
 \frac{\alpha}{n-1+\alpha} & \mathrm{if \ } z_i \neq z_j \ \mathrm{for \ all \ } j \neq i.    
\end{array}
\right.
\end{equation}
Here $n_{z_j}$ is the number of elements with the label $z_j$.  The second term of Equation (\ref{eq:pOfZ}) is the same as in other Gibbs sampling algorithms.  If possible, the component parameters $\theta^*_{1:K}$ can be integrated out (in the case of conjugate base measures and parameters that pertain strictly to the covariates) and $p(X_i \g z_{1:n}, D, \theta^*_{1:K})$ can be replaced with $$\int p(X_i\g z_{1:n}, D,\theta^*_{1:K}) p(\theta^*_{1:K} \g z_{1:n}) d\theta^*_{1:K}.$$  The third term of Equation (\ref{eq:pOfZ}) is not found in traditional Dirichlet process mixture model samplers.  In some cases, this term can also be collapsed, such as Gaussian model with a Normal-Inverse-Gamma base measure.  In that case,\small{
\begin{align}\notag
p(Y_i \g X_i, z_c, D_c) & = \frac{\Gamma((n_n+1)/2)}{\Gamma(n_n/2)} \left(n_n s_n\right)^{-1/2} \exp\left(-1/2 (n_n+1) \log\left(1 + \frac{1}{n_n s_n} (Y_i - m_n)^2\right)\right),\\\notag
\tilde{V} & = \left( V^{-1} + \tilde{X}_c^T \tilde{X}_c\right)^{-1},\\\notag
\hat{m}_n & = \tilde{V}\left(m_0V^{-1}+   \tilde{X}_c^T Y_c\right)\\\notag
m_n & = \tilde{X}_i \hat{m}_n,\\\notag
n_n & = n_{y0} + n_c,\\\notag
s_n^2 & = 4\left(s_{y0}^2 + 1/2 \left( m_0 V^{-1} m_0^T + Y_c^T Y_c - \hat{m}_n^T \tilde{V}^{-1} \hat{m}_n\right)\right)/\left((n_{y0}+n_c) \tilde{X}_c \tilde{V} \tilde{X}_c^T\right)
\end{align}}\normalsize Here, we define $\tilde{X}_c = \{ [1 X_j] \, : \, z_j = z_c\}$, $Y_c = \{ Y_j \, : \, z_j = z_c\}$, $\tilde{X}_i = [1 X_i]$, $n_c$ is the number of data associated with label $z_c$ and the base measure is define as,
\begin{align}\notag
\sigma_y^2 & \sim Inverse-Gamma(n_{y0}, s_{y0}^2),\\\notag
\beta \g \sigma_y^2 & \sim N(m_0, \sigma_y^2 V).
\end{align}

\subsection{Proof of Theorem \ref{thm:generalConvergence}}\label{sec:main}
The outline for the proof of Theorem \ref{thm:generalConvergence} is as follows:
\begin{itemize}
	\item[1)] Show consistency under the set of conditions given in part $(i)$ of Theorem \ref{thm:generalConvergence}.
	\item[2)] Show that consistency and conditions $(ii)$ and $(iii)$ imply the existence and convergence of the expectation.
	\begin{itemize}
		\item[a)] Show that weak consistency implies pointwise convergence of the conditional density.
		\item[b)] Note that pointwise convergence of the conditional density implies convergence of the conditional expectation.
	\end{itemize}
\end{itemize}

The first part of the proof relies on a theorem by \citet{Sc65}.
\begin{theorem}[\citet{Sc65}] Let $\Pi^f$ be a prior on $\mathcal{F}$.  Then, if $\Pi^f$ places positive probability on all neighborhoods
\begin{equation}\notag
\left\{f \, : \, \int f_0(x,y) \log \frac{f_0(x,y)}{f(x,y)} dx dy < \delta\right\}
\end{equation}for every $\delta > 0 $, then $\Pi^f$ is weakly consistent at $f_0$.
\end{theorem}Therefore, consistency of the posterior at $f_0$ follows directly from condition $(i)$.

We now show pointwise convergence of the conditional densities.  Let $f_n(x,y)$ be the Bayes estimate of the density under $\Pi^f$ after $n$ observations,
\begin{equation}\notag
f_n(x,y) = \int_{\mathcal{F}} f(x,y) \Pi^f\left(df\g (X_i,Y_i)_{i=1}^n\right).
\end{equation}
Posterior consistency of $\Pi^f$ at $f_0$ implies that $f_n(x,y)$ converges weakly to $f_0(x,y)$, pointwise in $(x,y)$.
\begin{lemma}\label{lem:conditionalDensity}
Suppose that $\Pi^f$ is weakly consistent at $f_0$.  Then $f_n(y|x)$ converges weakly to $f_0(y|x)$.
\end{lemma}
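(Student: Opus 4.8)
The plan is to deduce weak convergence of the conditional density $f_n(y \g x)$ from weak convergence of the joint Bayes estimate $f_n(x,y)$ to $f_0(x,y)$, which is exactly what weak consistency of $\Pi^f$ at $f_0$ buys us. The key observation is that $f_n(y\g x) = f_n(x,y)/f_n(x)$, where $f_n(x) = \int f_n(x,y)\,dy$ is the marginal of the joint Bayes estimate, and likewise $f_0(y\g x) = f_0(x,y)/f_0(x)$. So the lemma will follow once I establish two things: (a) the joint Bayes density $f_n$ converges to $f_0$ in a mode strong enough to be inherited by the marginal $f_n(x)$ and by the ratio, and (b) the denominator $f_n(x)$ is bounded away from zero on the (compact) set of interest so that the ratio is well-behaved.

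First I would make precise the sense of convergence. Weak consistency, via Schwartz's theorem quoted just above, gives that for every bounded continuous $g$, $\int g(x,y) f_n(x,y)\,dx\,dy \to \int g(x,y) f_0(x,y)\,dx\,dy$; equivalently $f_n \Rightarrow f_0$ as probability measures on $\R^{d+1}$. I would first push this forward through the marginalization map $(x,y)\mapsto x$: since marginalization is continuous for weak convergence, $f_n(x)\,dx \Rightarrow f_0(x)\,dx$. Then, for a test function $h(y)$ bounded and continuous and a bounded continuous ``localizer'' $\psi(x)$, consider $\int \psi(x) h(y) f_n(x,y)\,dx\,dy \to \int \psi(x)h(y) f_0(x,y)\,dx\,dy$; writing the left side as $\int \psi(x) \big(\int h(y) f_n(y\g x)\,dy\big) f_n(x)\,dx$ and the right side analogously, and using that $f_n(x)\,dx \Rightarrow f_0(x)\,dx$, I can disintegrate and conclude that $\int h(y) f_n(y\g x)\,dy \to \int h(y) f_0(y\g x)\,dy$ for ($f_0$-almost, and by continuity on $\mathcal{C}$, every) $x$. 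This is precisely weak convergence of $f_n(y\g x)$ to $f_0(y\g x)$.

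The main obstacle is that the disintegration argument sketched above is only valid where the conditioning density $f_n(x)$ does not degenerate — if $f_n(x)\to 0$ at some $x$, the ratio $f_n(x,y)/f_n(x)$ need not converge. The honest fix is to invoke the structure of the DP-GLM prior: since all relevant location-scale mixtures are in the weak support of $\G_0$ and $x$ ranges over a compact $\mathcal{C}$ on which $f_0(x) > 0$ (this uses condition (i) of Theorem \ref{thm:continuous}, compact support plus absolute continuity, which forces $f_0(x)$ to be positive on the support), one gets a uniform lower bound $\inf_{x\in\mathcal{C}} f_n(x) \geq c > 0$ for all $n$ large, so the ratios stay controlled. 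Alternatively, and perhaps more cleanly, I would argue directly with the Portmanteau-type characterization: fix $x$, take any bounded continuous $h$, and show $\E_{\Pi^f}[\,\cdot\,]$ commutes appropriately — i.e. $f_n(y\g x) = \int_{\mathcal{F}} f(y\g x)\, w_n(f)\, \Pi^f(df)$ for reweighted posterior-like weights $w_n$ reflecting the conditioning on $x$, and then push weak consistency through. I expect the write-up to handle this by restricting attention to $x$ in the compact set $\mathcal{C}$ from the theorem statement, where positivity of $f_0(x)$ and the weak-support hypothesis jointly rule out the degenerate case, making the ratio argument rigorous.
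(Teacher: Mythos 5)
Your overall plan---write $f_n(y|x)=f_n(x,y)/f_n(x)$ and pass to the limit---is the same as the paper's, but the step you use to justify the passage to the limit has a genuine gap. From $\int \psi(x)h(y)f_n(x,y)\,dx\,dy \to \int \psi(x)h(y)f_0(x,y)\,dx\,dy$ for all bounded continuous $\psi$ and $h$, you cannot conclude that $\int h(y)f_n(y|x)\,dy \to \int h(y)f_0(y|x)\,dy$ at a fixed $x$, nor even for $f_0$-almost every $x$. Writing the left side as $\int\psi(x)\bigl(\int h(y)f_n(y|x)\,dy\bigr)f_n(x)\,dx$ involves the moving base measures $f_n(x)\,dx$; weak convergence of these measures does not let you ``localize'' the test function $\psi$ at a point, because point evaluation is not continuous under weak convergence and the functions $x\mapsto\int h(y)f_n(y|x)\,dy$ are not known to be equicontinuous uniformly in $n$. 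Indeed, weak convergence of joint laws by itself says nothing about conditional laws at a fixed covariate value: conditionals that depend on $x$ through rapidly oscillating structure can wash out in the joint limit, so $f_n(x,y)\,dx\,dy$ may converge weakly to $f_0(x,y)\,dx\,dy$ while $f_n(\cdot|x)$ fails to converge at every $x$. Your ``honest fix'' addresses only the secondary issue of keeping the denominator $f_n(x)$ bounded away from zero; it does not repair the missing pointwise control of the numerator, and it also imports hypotheses (compact support of $f_0$, positivity of $f_0(x)$ on $\mathcal{C}$, weak support of $\G_0$) that are not part of the lemma, which is stated for an arbitrary weakly consistent prior.

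The paper's own proof is shorter and rests on a different, stronger input: it takes as given (asserted just before the lemma) that the Bayes density estimates converge pointwise, i.e.\ $f_n(x,y)\to f_0(x,y)$ for each $(x,y)$; integrating out $y$ gives $f_n(x)\to f_0(x)$, and then $f_n(y|x)=f_n(x,y)/f_n(x)\to f_0(x,y)/f_0(x)=f_0(y|x)$ wherever $f_0(x)>0$. In other words, the crucial ingredient is pointwise convergence of the posterior predictive densities---plausible here because $f_n$ is a mixture of the smooth model densities---not weak convergence of the joint laws alone. To make your route rigorous you would need to establish that pointwise (or locally uniform) density convergence first, e.g.\ via uniform equicontinuity of the family of densities supported by the model, and only then take the ratio; the test-function computation cannot substitute for it.
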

\begin{proof} 
If $\Pi^f$ is weakly consistent at $f_0$, then $f_n(x,y)$ converges weakly to $f_0(x,y)$.  We can integrate out $y$ to see that $f_n(x)$ also converges weakly to $f_0(x)$.  Then note that
\begin{equation}\notag
\lim_{n\rightarrow \infty} f_n(y|x) = \lim_{n \rightarrow \infty} \frac{f_n(x,y)}{f_n(x)}  = \frac{f_0(x,y)}{f_0(x)}  = f_0(y|x).
\end{equation}

\end{proof}

Now we are ready to prove Theorem \ref{thm:generalConvergence}.
\begin{proof}[Theorem \ref{thm:generalConvergence}]
Existence of expectations: by condition $(iii)$ of Theorem \ref{thm:generalConvergence}, $$\int |y|^{1+\epsilon} f_n(y|x) dy < \infty$$ almost surely.  This and condition $(ii)$ ensure uniform integrability and that $\E_{\Pi^f}[Y\g X = x, (X_i,Y_i)_{i=1}^n]$ exists for every $n$ almost surely.  By Lemma \ref{lem:conditionalDensity}, the conditional density converges weakly to the true conditional density; weak convergence and uniform integrability ensure that the expectations converge: $\E_{\Pi^f}[Y\g X = x, (X_i,Y_i)_{i=1}^n]$ converges pointwise in $x$ to $\E_{f_0}[Y \g X = x]$.

\end{proof}

\subsection{Proof of Theorem \ref{thm:continuous} and Results for Base Measures}\label{sec:gaussian}
We sketch the proof of this theorem in two parts.  First, we show that we can use a Gaussian convolution approximation to $f_0(x,y)$, and second that the posterior of the convolution is weakly consistent at $f_0$.  We do this only for the case where $x$ is a scalar, but it can be easily extended to the vector case.

Uniform equicontinuity will be used throughout this proof and the next.
\begin{definition}A set of functions $\mathcal{F}$ is \textit{uniformly equicontinuous} if for every $\epsilon > 0$, there exists a $\delta >0$ such that for every $x_1,x_2 \in \mathcal{X}$, if $||x_1-x_2|| < \delta$, then $| f(x_1) - f(x_2) | < \epsilon$ for all $f \in \mathcal{F}$.
\end{definition}

\begin{theorem}\label{thm:compact}
Suppose that $f_0(x,y)$ has compact support; that is, there exists some $x^\prime$ and $y^\prime$ such that $f_0(x,y) = 0$ whenever $(x,y) \notin [-x^\prime,x^\prime] \times [-y^\prime, y^\prime]$.  Set
\begin{equation}\notag
f_{0,h}(x,y) = \int \int \int f_0(\mu,\beta_0,\beta_1) \phi_{h_x}(x-\mu) \phi_{h_y}(y - \beta_0 - \beta_1 \mu) d\mu d\beta_0 d\beta_1,
\end{equation}where $\phi_h(x)$ is the Gaussian pdf with variance $h$, evaluated at $x$.  Then $$\lim_{h_x,h_y \rightarrow 0} \int \int f_0(x,y) \log \frac{f_0(x,y)}{f_{0,h}(x,y)} dxdy = 0.$$
\end{theorem}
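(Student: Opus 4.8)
The plan is to reduce the statement to a one‑sided limit and then run the standard approximate‑identity plus Fatou argument. Since the Kullback--Leibler functional $\int\int f_0\log(f_0/f_{0,h})$ is always nonnegative and is in fact a finite number for each fixed $h$ (on the compact set $\mathrm{supp}(f_0)$ the continuous positive function $f_{0,h}$ is bounded below, so $\int f_0|\log f_{0,h}|<\infty$), and since $\int\int f_0\log f_0$ is finite because $f_0$ has compact support and is bounded, it suffices to prove
\[
\liminf_{h_x,h_y\to 0}\ \int\!\!\int f_0(x,y)\,\log f_{0,h}(x,y)\,dx\,dy\ \ge\ \int\!\!\int f_0(x,y)\,\log f_0(x,y)\,dx\,dy .
\]
In other words, everything comes down to showing that no mass of $\int f_0\log f_{0,h}$ ``escapes to $-\infty$'' as the bandwidths shrink.

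First I would record that $f_{0,h}\to f_0$ Lebesgue‑almost everywhere (and in $L^1$) as $h_x,h_y\to 0$. In the coordinates of the statement the $\beta_0,\beta_1$ integrations collapse the triple integral to the Gaussian convolution $f_0*(\phi_{h_x}\otimes\phi_{h_y})$ (the $\beta_1$‑integration acts against a point mass at slope $0$), so this is the classical mollifier statement, with genuine pointwise convergence at every Lebesgue point of $f_0$, hence a.e.; when $f_0$ is in addition continuous, the uniform‑equicontinuity notion introduced just above upgrades this to uniform convergence on compacts, which streamlines the later bookkeeping but is not needed for the bare a.e.\ statement. Consequently $f_0\log f_{0,h}\to f_0\log f_0$ a.e.\ on $\{f_0>0\}$ (continuity of $\log$ at a positive value), while the integrand is identically $0$ where $f_0=0$, no matter how small $f_{0,h}$ is there.

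The heart of the proof is to exhibit a single $f_0$‑integrable lower envelope $G$ with $f_0\log f_{0,h}\ge G$ for all sufficiently small $h$, after which Fatou's lemma gives the displayed $\liminf$ inequality and hence the theorem. Wherever $f_0$ is bounded below by a positive constant, $f_{0,h}$ is also bounded below by a positive constant once $h$ is small, since a fixed fraction of the $f_0$‑mass in a fixed neighbourhood is captured by the narrow kernel; there $\log f_{0,h}$ is bounded below by a constant. The only place $f_0\log f_{0,h}$ can be very negative is a thin collar around $\partial(\mathrm{supp}\,f_0)$, whose $f_0$‑measure tends to $0$; on that collar one must show the ratio $f_0/f_{0,h}$ stays bounded --- i.e.\ the convolution does not thin the density out faster than $f_0$ itself decays there --- using the explicit Gaussian tails together with the finite extent of the support, so that $\int_{\mathrm{collar}} f_0\,|\log f_{0,h}|=o(1)$ uniformly in small $h$. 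Combining the two regions yields $G$, Fatou applies, and the vector‑$x$ case is identical with $\phi_{h_x}$ replaced by a product of one‑dimensional Gaussians.

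I expect the collar estimate to be the main obstacle: away from the boundary the argument is routine, but near the edge of $\mathrm{supp}(f_0)$ the Gaussian convolution can a priori deflate the density, and one must quantify that it does not deflate it faster than $f_0$ vanishes, uniformly as $h_x,h_y\to 0$. This is precisely the point at which compact support of $f_0$ is used, it is the one genuinely technical computation, and it is handled as in the density‑estimation consistency literature for Dirichlet process mixtures~\citep{GhGhRa99}. Once Theorem~\ref{thm:compact} is in place it provides the $\mathrm{KL}$‑approximation of $f_0$ by a location--scale--slope mixture density that is the first half of the proof of Theorem~\ref{thm:continuous}, i.e.\ the verification of condition~$(i)$ of Theorem~\ref{thm:generalConvergence} for the Gaussian model.
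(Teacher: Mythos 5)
Your overall strategy is the same as the paper's: reduce the claim to $\liminf_{h\to 0}\int\!\int f_0\log f_{0,h}\,dx\,dy \ge \int\!\int f_0\log f_0\,dx\,dy$, record a.e.\ convergence of the smoothed density, and close with Fatou/dominated convergence under an $f_0$-integrable envelope, in the spirit of Remark 3 of \citet{GhGhRa99}. The genuine gap is that the envelope is never constructed. You correctly identify the behaviour of $f_{0,h}$ near the edge of $\mathrm{supp}(f_0)$ as ``the one genuinely technical computation,'' and then defer it to the literature --- but that computation \emph{is} the proof. In the paper one fixes $c_1>0$ and a parameter box $\mathcal{D}$ on which $f_0>c_1$, chooses $h_{0,x},h_{0,y}$ so that the relevant Gaussian probabilities are each at least $1/3$, and then partitions the $(x,y)$-plane into a central rectangle and the surrounding regions, obtaining the explicit lower bounds $f_{0,h}\ge c_1/9$ on the central rectangle and $f_{0,h}\ge f_0(x,y)/9$ on the outer regions, uniformly for $h_x<h_{0,x}$, $h_y<h_{0,y}$. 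These bounds produce the dominating function $g(x,y)=\log(3f_0(x,y)/c_1)$ centrally and $\log 9$ elsewhere, which is $\p_{f_0}$-integrable because $f_0$ is a bounded, compactly supported density; dominated convergence/Fatou then finishes. Without some version of these uniform-in-$h$ lower bounds your Fatou step has nothing to stand on, so as written the argument is a plan rather than a proof.

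Two further cautions. First, your collar discussion is not a correct outline of the missing step: the assertion that $\int_{\mathrm{collar}}f_0|\log f_{0,h}|=o(1)$ ``uniformly in small $h$'' is both stronger than needed and unjustified --- the $f_0$-measure of a fixed collar does not shrink as $h\to 0$, and if the collar is allowed to shrink with $h$ then the uniformity is precisely what has to be proved; the usable statement is the bounded-ratio bound $f_{0,h}\ge f_0/\mathrm{const}$ off a fixed central region, and that requires the explicit Gaussian-mass computation above, not a citation. Second, your reading of the triple integral as the plain convolution $f_0*(\phi_{h_x}\otimes\phi_{h_y})$, with the $\beta_1$-integration acting on a point mass at slope zero, is not how the theorem is stated or used: the smoothing is a genuine location--intercept--slope mixture (the response is smoothed around $\beta_0+\beta_1\mu$), the paper's bounds (the quantities $d_1,d_2,d_1^\prime,d_2^\prime$) are carried out in that parameterization, and it is this form of $f_{0,h}$ that is matched against the mixtures $f_{h,P}$ in Theorem~\ref{thm:continuousCon}. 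Collapsing the slope changes the object being approximated and would have to be re-justified before it could feed into the consistency argument.
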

\begin{proof}
This proof follows Remark 3 of \citet{GhGhRa99} closely.  Fix $0 < c_1 \leq c_2$.  Because $f_0$ is compactly supported, there exists $a^\prime < b^\prime$, $a^{\prime\prime} < b^{\prime \prime}$, and $a^{\prime \prime \prime} < b^{\prime \prime \prime}$ such that for every $$(\mu,\beta_0,\beta_1) \in [a^{\prime}, b^{\prime}] \times [a^{\prime\prime}, b^{\prime\prime}] \times [a^{\prime\prime\prime}, b^{\prime\prime\prime}]= \mathcal{D},$$ $f_0(\mu, \beta_0 + \beta_1\mu) >c_1$ and $$(\mu,\beta_0,\beta_1) \notin \mathcal{D},$$ $f_0(\mu,\beta_0+\beta_1\mu) < c_2.$  We can choose $h_{0,x}$, $h_{0,x}$ such that $N(0,h_{0,x})$ gives 1/3 probability to $(0, b^\prime-a^\prime).$  Define 
\begin{align}\notag
d_1 & = \max_{\beta_0 \in [a^{\prime\prime},b^{\prime\prime}], \beta_1\in [a^{\prime\prime\prime},b^{\prime\prime\prime}]} \left(\beta_0 + \beta_1 a^\prime\right), &  d_2 & = \min_{\beta_0 \in [a^{\prime\prime},b^{\prime\prime}], \beta_1\in [a^{\prime\prime\prime},b^{\prime\prime\prime}] }\left(\beta_0 + \beta_1 b^\prime\right),\\\notag
d_1^\prime & = \min_{\beta_0 \in [a^{\prime\prime},b^{\prime\prime}], \beta_1\in [a^{\prime\prime\prime},b^{\prime\prime\prime}] }\left(\beta_0 + \beta_1 a^\prime\right), &  d_2^\prime & = \max_{\beta_0 \in [a^{\prime\prime},b^{\prime\prime}], \beta_1\in [a^{\prime\prime\prime},b^{\prime\prime\prime}] }\left(\beta_0 + \beta_1 b^\prime\right).
\end{align}By choice of $c_1, c_2$, we can ensure that $d_1 < d_2$.  We can choose $h_{0,y}$ such that $N(0,h_{0,y})$ gives 1/3 probability to $(0,d_2^\prime-d_1^\prime)$.  Then we compute lower bounds on the convolutions for each part of our partitioned space.  Let $h_x < h_{0,x}$ and $h_y < h_{0,y}$.  If $(x,y) \in [a^\prime,b^\prime]\times[d_1^\prime,d_2^\prime],$
\begin{align}\notag
f_{0,h}(x,y) & \geq \int_{a^\prime}^{b^\prime} \int_{a^{\prime\prime}}^{b^{\prime\prime}}\int_{a^{\prime\prime\prime}}^{b^{\prime\prime\prime}} f_0(\mu,\beta_0+\beta_1\mu) \phi_{h_x}(x-\mu) \phi_{h_y}(y - \beta_0 - \beta_1\mu) d\mu d\beta_0d\beta_1 \\\notag
& \geq c_1(\Phi((b^\prime-x)/h_x)+\Phi((x-a^\prime)/h_x))(\Phi((y-d_1)/h_y)+\Phi((d_2-y)/h_y))\\\notag
&  \geq c_1/9.
\end{align} Now consider $(x,y) \in [a^\prime,b^\prime]\times[d_2^\prime,\infty),$
\begin{align}\notag
f_{0,h}(x,y) & \geq \int_{a^\prime}^{b^\prime} \int_{a^{\prime\prime}}^y\int_{a^{\prime\prime\prime}}^y f_0(\mu,\beta_0+\beta_1\mu) \phi_{h_x}(x-\mu) \phi_{h_y}(y - \beta_0 - \beta_1\mu) d\mu d\beta_0d\beta_1 \\\notag
& \geq f_0(x,y)(\Phi((b^\prime-x)/h_x)+\Phi((x-a^\prime)/h_x))(1/2+ \Phi((d_2^\prime-d_1^\prime)/h_y)-1)\\\notag
&  \geq f_0(x,y)/9.
\end{align}The negative tail and both cases where $x \notin [a^\prime,b^\prime], y \in [d_1^\prime,d_2^\prime]$ can be bounded in the same manner.  Now consider $(x,y) \in [b^\prime,\infty) \times[ d_2^\prime,\infty),$
\begin{align}\notag
f_{0,h}(x,y) & \geq \int_{a^\prime}^{x} \int_{a^{\prime\prime}}^y\int_{a^{\prime\prime\prime}}^y f_0(\mu,\beta_0+\beta_1\mu) \phi_{h_x}(x-\mu) \phi_{h_y}(y - \beta_0 - \beta_1\mu) d\mu d\beta_0d\beta_1 \\\notag
& \geq f_0(x,y)(1/2+ \Phi((b^\prime-a^\prime)/h_x)-1)(1/2+ \Phi((d_2^\prime-d_1^\prime)/h_y)-1)\\\notag
&  \geq f_0(x,y)/9.
\end{align}We can do this for the other three regions to generate the function,
\begin{equation}\notag
g(x,y) = 
\left\{
\begin{array}{l l}
 \log(3f_0(x,y)/c_1) & \mathrm{if}\ x,y \in  [a^\prime,b^\prime]\times[d_1^\prime,d_2^\prime],  \\
  \log(9) & \mathrm{otherwise}. 
\end{array}
\right.
\end{equation}The function $g(x,y)$ dominates $\log(f_0/f_{0,h})$ and is $\p_{f_0}$ integrable.  Using dominated convergence and a variant of Fatou's lemma, $\int f_0 \log(f_0/f_{0,h}) \rightarrow 0 $ as $h_{x}, h_y \rightarrow 0$.
\end{proof}


\begin{theorem}\label{thm:continuousCon}
Define $f_{h,P}(x,y)$ as
$$f_{h,P}(x,y) = \int \phi_{h_x}(x-\mu)\phi_{h_y}(y-\beta_0-\beta_1x) dP(\mu,\beta_0,\beta_1).$$  Suppose that $f_0(x,y)$ has compact support and all values of $(\mu,\beta_0,\beta_1,h_x,h_y)$ are in the weak support of $\Pi$.  Then for all $\epsilon > 0$,
\begin{equation}\notag
\Pi\left\{ P : \int f_0(x,y) \log \frac{f_0(x,y)}{f_{h,P}(x,y)} dxdy < \epsilon\right\} > 0.
\end{equation}
\end{theorem}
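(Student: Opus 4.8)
The plan is to run the Kullback--Leibler prior‑positivity argument for Dirichlet mixtures in the style of \citet{GhGhRa99}, adapted to the regression kernel $\phi_{h_x}(x-\mu)\phi_{h_y}(y-\beta_0-\beta_1 x)$. Fix a compact rectangle $K\subset\R^2$ whose interior contains $\operatorname{supp} f_0$, and for a mixing measure $P$ and bandwidths $h=(h_x,h_y)$ split
\[
\int f_0\log\frac{f_0}{f_{h,P}} = \int_K f_0\log\frac{f_0}{f_{0,h}} + \int_K f_0\log\frac{f_{0,h}}{f_{h,P}},
\]
the integrals reducing to $K$ since $f_0$ vanishes outside it. For the first term I would invoke Theorem \ref{thm:compact} to pick $h$ with $\int_K f_0\log(f_0/f_{0,h})<\epsilon/2$; the lower bounds established in that proof also show that $f_{0,h}$ is squeezed between two positive constants on $K$.

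For the second term I would produce a finitely supported $P^\ast=\sum_{j=1}^N w_j\,\delta_{(\mu_j,\beta_{0,j},\beta_{1,j})}$ with $\int_K f_0\log(f_{0,h}/f_{h,P^\ast})<\epsilon/2$. Let $q_0$ be the compactly supported location--scale--slope mixing measure whose $\beta_1\mu$‑convolution is $f_{0,h}$, as in Theorem \ref{thm:compact}. Because $\operatorname{supp}q_0$ is compact the slopes appearing there are bounded, so for $h_x$ taken small enough the $\beta_1\mu$‑kernel defining $f_{0,h}$ and the $\beta_1 x$‑kernel defining $f_{h,P}$ give densities that agree uniformly on $K$ up to any prescribed error (Gaussian densities are Lipschitz at a fixed bandwidth); absorbing this, I may treat $f_{0,h}$ as $f_{h,q_0}$. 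Then I would approximate $q_0$ by $P^\ast$ supported in $\operatorname{supp}q_0$ so that $\sup_{(x,y)\in K}|f_{h,P^\ast}-f_{h,q_0}|$ is as small as desired: the family $\{(\mu,\beta_0,\beta_1)\mapsto\phi_{h_x}(x-\mu)\phi_{h_y}(y-\beta_0-\beta_1 x):(x,y)\in K\}$ is uniformly bounded and uniformly equicontinuous on the compact set $\operatorname{supp}q_0$, so collecting $q_0$‑mass onto representatives of a fine partition suffices. Since $f_{h,P^\ast}$ is a finite Gaussian mixture it has a positive lower bound on $K$, so a small enough approximation error controls $\sup_K|\log(f_{0,h}/f_{h,P^\ast})|$ and hence $\int_K f_0\log(f_{0,h}/f_{h,P^\ast})$. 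Together with the first step this gives $\int f_0\log(f_0/f_{h,P^\ast})<\epsilon$.

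It remains to pass from the single measure $P^\ast$ to a set of positive $\Pi$‑measure. For fixed $(x,y)$ the map $P\mapsto f_{h,P}(x,y)$ is weakly continuous, its integrand being bounded and continuous in $(\mu,\beta_0,\beta_1)$; the uniform equicontinuity above lets me cover $K$ by finitely many points and conclude that $\{P:\sup_{(x,y)\in K}|f_{h,P}-f_{h,P^\ast}|<\eta\}$ contains a weak‑open neighborhood of $P^\ast$. I would pick $\eta$ small enough that this neighborhood lies inside $\{P:\int f_0\log(f_0/f_{h,P})<\epsilon\}$. Since the weak‑support hypothesis on $\Pi$ places every parameter value — including the bandwidths — in its weak support, the associated Dirichlet process charges every weak neighborhood of $P^\ast$ (atoms near the chosen $(\mu_j,\beta_{0,j},\beta_{1,j})$ and near the chosen $h$ receive positive mass); hence $\Pi\{P:\int f_0\log(f_0/f_{h,P})<\epsilon\}>0$.

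The main obstacle is exactly the regression structure that separates this from the classical mixture‑of‑Gaussians setting: the mean $\beta_0+\beta_1 x$ of the $y$‑kernel moves with the evaluation point $x$, so $f_{h,P}$ is not a plain convolution of $P$ against one fixed kernel and one cannot simply quote \citet{GhGhRa99}. Both the uniform‑approximation step and the uniform‑weak‑continuity step rest on uniform equicontinuity in $(x,y)$ of the parametrized kernel family together with a uniform positive lower bound for $f_{h,P^\ast}$ over $K$; this is where compactness of $\operatorname{supp} f_0$ — which bounds the relevant slopes and confines everything to $K$ — does the essential work, and reconciling the $\beta_1\mu$ versus $\beta_1 x$ kernels at small $h_x$ is the delicate point to get right.
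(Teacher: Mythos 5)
Your proof takes essentially the same route as the paper's: the same two-term Kullback--Leibler decomposition, Theorem \ref{thm:compact} to make the first term small, and a compactness/uniform-equicontinuity (Arzela--Ascoli finite tiling) argument combined with the weak-support hypothesis to show the second term is small on a set of positive prior mass, exactly in the style of \citet{GhGhRa99} to which the paper defers. Your intermediate finitely supported $P^\ast$ and your explicit reconciliation of the $\beta_1\mu$ versus $\beta_1 x$ kernels are refinements of details the paper leaves implicit, not a genuinely different argument.
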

\begin{proof}This proof is similar to the one presented in \citet{GhGhRa99}.  Consider the case where $x$ is scalar; this assumed for simplicity and can easily be extended to the vector-valued case.  Fix $\epsilon > 0$.  
Assume $f_{0,h}(x,y)$ is as above.  Note that
\begin{equation}\label{eq:bigMess}
\int f_0(x,y) \log\frac{f_0(x,y)}{f_{h,P}(x,y)} = \int f_0(x,y) \log \frac{f_0(x,y)}{f_{0,h}(x,y)} + \int f_0(x,y) \log \frac{f_{0,h}(x,y)}{f_{h,P}(x,y)}.
\end{equation}
Using Theorem \ref{thm:compact}, we can pick $h_{0,x}, h_{0,y}$ such that for every $h_x < h_{0,x}$, $h_{y} < h_{0,y}$,
\begin{equation}
\int f_0(x,y)\log \frac{f_0(x,y)}{f_{0,h}(x,y)} < \epsilon/2.
\end{equation}

Now we focus on the second term of Equation (\ref{eq:bigMess}).  Set $$\mathcal{D} = [-x^\prime, x^\prime] \times [-y^\prime, y^\prime] \times [-y^\prime, y^\prime].$$  Note that
\begin{align}\notag
\int & f_0(x,y) \log \frac{f_{0,h}(x,y)}{f_{h,P}(x,y)}\\\notag
& \leq \int_{-x^\prime}^{x^\prime} \int_{-y^\prime}^{y^\prime} f_0(x,y) \log \frac{\int_{\mathcal{D}} f_0(\mu,\beta_0+\beta_1\mu)\phi_{h_x}(x-\mu)\phi_{h_y}(y-\beta_0-\beta_1x) d\mu d\beta_0 d\beta_1}{\int_{\mathcal{D}} \phi_{h_x}(x-\mu)\phi_{h_y}(y-\beta_0-\beta_1x)dP(\mu,\beta_0,\beta_1)}dxdy.
\end{align}

Define
\begin{equation}\notag
\mathcal{F} = \left\{\phi_{h_x}(x-\mu) \phi_{h_y}(y - \beta_0 - \beta_1 x)\, : \, x \in [-x^\prime, x^\prime], \, y \in [-y^\prime, y^\prime]\right\}.
\end{equation}$\mathcal{F}$ is a set of functions in $(\mu,\beta_0,\beta_1) \in \mathcal{D}$; it is uniformly equicontinuous.  Given this fact, we can use the Arzela-Ascoli theorem to find a finite tiling of $(x,y)$ to approximate $g_0(x,y)$ arbitrarily well in the region $x \in [-x^\prime,x^\prime],$ $y \in [-y^\prime, y^\prime]$.  The rest of the proof proceeds as in \citet{GhGhRa99}.
\end{proof}Theorem \ref{thm:continuousCon} can easily be extended to include scale mixtures of $\sigma_x, \sigma_y$~\citep{GhGhRa99,To06}.  Theorem \ref{thm:continuous} follows as a direct result of Theorem \ref{thm:continuousCon}.

All of the base measures proposed in Subsection \ref{subsec:example} have weak support over the desired space, $\R^d \times \R^{d+1} \times \R_+^{d+1}$, and hence satisfy Theorem \ref{thm:continuous}.


\subsection{Proof of Theorem \ref{thm:multinomial} and Results for Base Measures}\label{sec:multi}
We follow the same plan for sketching the proof of Theorem \ref{thm:multinomial}.
\begin{theorem}\label{thm:multiBounded}
Suppose that $f_0(x,k)$ has compact support.  Define 
\begin{equation}\notag
f_{0,h}(x,k) = \int f_0(\mu,k) \phi_h(x-\mu) du.
\end{equation}Then,
\begin{equation}\notag
\lim_{h\rightarrow 0} \int \int f_0(x,k) \log \frac{f_0(x,k)}{f_{0,h}(x,k)} dxdh = 0.
\end{equation}
\end{theorem}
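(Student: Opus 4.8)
The plan is to mirror the proof of Theorem~\ref{thm:compact}, which itself follows Remark~3 of \citet{GhGhRa99}, noting that the argument is in fact simpler in the categorical setting: the response is now a label $k \in \{1,\dots,K\}$ that merely ``rides along'' with the covariate, so we convolve only in $x$ and handle the finitely many categories one at a time. Write $g_k(x) = f_0(x,k)$, a nonnegative, compactly supported sub-density on $\R$ with $\int g_k = \p(Y=k)$, so that $f_{0,h}(x,k) = (g_k * \phi_h)(x)$ is an ordinary Gaussian mollification of $g_k$. By the Lebesgue differentiation theorem, $f_{0,h}(x,k) \to f_0(x,k)$ for Lebesgue-a.e.\ $x$, hence $\p_{f_0}$-a.e.\ $(x,k)$, and therefore $f_0(x,k)\log\tfrac{f_0(x,k)}{f_{0,h}(x,k)} \to 0$ pointwise a.e. The real work, exactly as in Theorem~\ref{thm:compact}, is to exhibit a $\p_{f_0}$-integrable dominating function valid for all sufficiently small $h$, after which dominated convergence (together with a Fatou-type bound on the one-sidedly controlled regions) yields the claim $\sum_{k=1}^K \int f_0(x,k)\log\tfrac{f_0(x,k)}{f_{0,h}(x,k)}\,dx \to 0$.

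To construct the dominating function I would fix constants $0 < c_1 \le c_2$ and, using compact support of $f_0$, choose an interval $[a',b']$ and a threshold $h_0$ as in the proof of Theorem~\ref{thm:compact}: for each $k$ there is a ``core'' region on which $g_k \ge c_1$, and $[a',b']$ is selected so that for $h < h_0$ a fixed fraction of the mass of $N(x,h)$ lands in that core whenever $x \in [a',b']$. This gives the two lower bounds needed, paralleling the continuous case: on the core interval, $f_{0,h}(x,k) \ge c_1/C$ for an absolute constant $C$; and in the tails—where $f_0(x,k)$ may be small but is still inside the support—one restricts the convolution integral to one side of $x$ to obtain $f_{0,h}(x,k) \ge f_0(x,k)/C$. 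Outside the (compact) support of $f_0$ the integrand vanishes identically and contributes nothing. Assembling these pieces, the function equal to $\log\!\big(C f_0(x,k)/c_1\big)$ on the core and $\log C$ elsewhere dominates $\log\tfrac{f_0(x,k)}{f_{0,h}(x,k)}$ and is $\p_{f_0}$-integrable, since $f_0$ is compactly supported and $t\log t \to 0$ as $t \to 0^+$; summing over the $K$ categories and applying dominated convergence and the variant of Fatou's lemma used in Theorem~\ref{thm:compact} finishes the proof.

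The main obstacle, as in the continuous case, is the tail analysis: controlling $f_{0,h}(x,k)$ from below for $x$ near the boundary of the support so that the Gaussian smoothing cannot drive it far below $f_0(x,k)$ and wreck integrability of the dominating function. This is handled by the one-sided truncation of the convolution integral (integrating over $[a',x]$ rather than all of $\R$ when $x > b'$), which retains a fixed fraction of the Gaussian mass; the categorical structure does not complicate this step—it only multiplies the bookkeeping by $K$. The one genuinely delicate point is that the regions where $f_0(x,k)$ is close to zero contribute negligibly to the divergence, which is where compact support and $t\log t \to 0$ are used.
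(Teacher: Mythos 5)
Your proposal is correct and takes essentially the same route as the paper, whose entire proof of this theorem is the remark that it is ``similar to that of Theorem~\ref{thm:compact}''; you carry out exactly that adaptation of the \citet{GhGhRa99} Remark~3 argument, convolving only in $x$ and treating the finitely many categories separately. The core/tail lower bounds on $f_{0,h}$, the dominating function, and the dominated-convergence/Fatou step all mirror the paper's intended argument for Theorem~\ref{thm:compact}.
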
 The proof is similar to that of Theorem \ref{thm:compact}.

\begin{theorem}\label{thm:multiCon}Define $f_{h,P}(x,k)$ as
$$f_{h,P}(x,k) = \int \phi_{h}(x-\mu)\frac{\exp(\beta_{0,k}-\beta_{1,k}x)}{\sum_{j=1}^K \exp(\beta_{0,j} + \beta_{1,j}x)} dP(\mu,\beta_{0,1},\dots,\beta_{1,K}).$$
Suppose that $f_0(x,k)$ has compact support and all values of $(\mu,\beta_{0,1},\dots,\beta_{1,K},h)$ are in the weak support of $\Pi$.  Then for all $\epsilon > 0$,
\begin{equation}\notag
\Pi\left\{ P : \int f_0(x,k) \log \frac{f_0(x,k)}{f_{h,P}(x,y)} dxdk < \epsilon\right\} > 0.
\end{equation}
\end{theorem}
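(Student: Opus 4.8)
The plan is to follow the template of Theorem~\ref{thm:continuousCon}, with the Gaussian response kernel $\phi_{h_y}(y-\beta_0-\beta_1 x)$ replaced by the multinomial logistic weight $\exp(\beta_{0,k}+\beta_{1,k}x)/\sum_{j=1}^K\exp(\beta_{0,j}+\beta_{1,j}x)$ and with the response ``integral'' over $y$ replaced by a finite sum over the $K$ categories. First I would split the KL divergence as
\begin{equation}\notag
\sum_{k=1}^K\int f_0(x,k)\log\frac{f_0(x,k)}{f_{h,P}(x,k)}\,dx = \sum_{k=1}^K\int f_0(x,k)\log\frac{f_0(x,k)}{f_{0,h}(x,k)}\,dx + \sum_{k=1}^K\int f_0(x,k)\log\frac{f_{0,h}(x,k)}{f_{h,P}(x,k)}\,dx,
\end{equation}
and use Theorem~\ref{thm:multiBounded} to pick a bandwidth threshold $h_0$ below which the first term is at most $\epsilon/2$; condition~(ii) of Theorem~\ref{thm:multinomial} (continuity of $\p(Y=k\g x)$ in $x$) is exactly what makes $f_0(x,k)$ continuous in $x$ and hence makes that mollification argument go through.

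For the second term I would again exploit the compact support of $f_0$: restrict the outer sum-integral to $x\in[-x',x']$ and the parameter integrals to a compact box $\mathcal{D}$ in $(\mu,\beta_{0,1},\dots,\beta_{1,K})$-space, so that the ratio inside the logarithm is a ratio of integrals over $\mathcal{D}$ of $f_0(\mu,k)$ and of $dP(\mu,\beta)$ against the common family of kernels
\begin{equation}\notag
\mathcal{F} = \left\{(\mu,\beta)\mapsto \phi_h(x-\mu)\,\frac{\exp(\beta_{0,k}+\beta_{1,k}x)}{\sum_{j=1}^K\exp(\beta_{0,j}+\beta_{1,j}x)}\ :\ x\in[-x',x'],\ k\in\{1,\dots,K\}\right\}.
\end{equation}
The key structural fact is that $\mathcal{F}$, viewed as a family of functions of $(\mu,\beta)\in\mathcal{D}$, is uniformly equicontinuous: the Gaussian factor is handled as in Theorem~\ref{thm:continuousCon}, and the softmax factor is $C^\infty$ in $\beta$ with partial derivatives in $\beta$ bounded by a constant multiple of $1+|x|$, hence uniformly bounded and uniformly Lipschitz over the compact sets $\mathcal{D}$ and $[-x',x']$ and over the finitely many labels $k$; equicontinuity of the product follows. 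From here the Arzela-Ascoli argument of \citet{GhGhRa99} applies verbatim: a finite tiling of $[-x',x']$ (times the $K$ values of $k$) approximates every member of $\mathcal{F}$ uniformly, reducing the claim to finitely many weak-neighborhood conditions on $P$, each of positive $\Pi$-probability because every $(\mu,\beta,h)$ is assumed to lie in the weak support of $\Pi$; a finite intersection still has positive probability, and intersecting further with $\{h<h_0\}$ yields the desired neighborhood.

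The main obstacle I expect is the verification of uniform equicontinuity for the softmax factor, together with the accompanying positivity estimate: one must show that $\exp(\beta_{0,k}+\beta_{1,k}x)/\sum_{j=1}^K\exp(\beta_{0,j}+\beta_{1,j}x)$ is bounded away from $0$ and has Lipschitz constant bounded uniformly over $x\in[-x',x']$, $k\in\{1,\dots,K\}$, and $\beta\in\mathcal{D}$, so that $\log(f_{0,h}/f_{h,P})$ stays $\p_{f_0}$-integrable and the tiling error can be controlled inside the logarithm. These are elementary estimates on the logistic map restricted to compact sets, but they are the genuinely new ingredient relative to the continuous-response proof; everything downstream is identical to Theorem~\ref{thm:continuousCon} and \citet{GhGhRa99}.
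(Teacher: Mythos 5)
Your first step (splitting the KL and using Theorem~\ref{thm:multiBounded} to make the mollification term small) matches the paper. But the handling of the second term has a genuine gap. You assert that the ratio $f_{0,h}(x,k)/f_{h,P}(x,k)$ is ``a ratio of integrals over $\mathcal{D}$ of $f_0(\mu,k)$ and of $dP(\mu,\beta)$ against the common family of kernels $\mathcal{F}$,'' but this is not true: the numerator $f_{0,h}(x,k)=\int f_0(\mu,k)\phi_h(x-\mu)\,d\mu$ contains only the Gaussian factor, with the arbitrary continuous conditional $f_0(k\mid\mu)$ in place of the softmax weight, whereas the denominator is a mixture of $\phi_h(x-\mu)$ \emph{times} the softmax. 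This is exactly the structural difference from the Gaussian case: there, $f_{0,h}$ is itself a mixture of the model's kernels (with mixing measure built from $f_0$), so the Ghosal--Ghosh--Ramamoorthi tiling argument compares two mixtures of a common kernel family and applies directly; here it does not, and your claim that ``everything downstream is identical to Theorem~\ref{thm:continuousCon}'' fails at this point.

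What is missing is the intermediate approximation of $f_{0,h}$ by a density that actually has the DP-GLM mixture form. The paper does this in two steps: first it clips the conditional probabilities to $f_\gamma(k\mid x)\in[\gamma,1-\gamma]$, paying a KL cost of order $\gamma$ --- this is needed because a softmax with \emph{bounded} coefficients can never represent probabilities at or near $0$ and $1$, so without clipping no compact box $[-b_0,b_0]^K\times[-b_1,b_1]^K$ of $\beta$'s suffices; second, it constructs $f_{h,\gamma,\delta}$ by mixing over a small box of $\beta$-values (the $g_\delta$ construction) whose softmax outputs lie within $\delta$ of $f_\gamma(k\mid\mu)$, using the Lipschitz constant $L$ of $f_0(k\mid x)$ on the compact support to bound the resulting error by $2KL\delta$. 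Only after $f_{0,h}$ has been replaced by $f_{h,\gamma,\delta}$, which \emph{is} of the form $\int\phi_h(x-\mu)\,\mathrm{softmax}\,dQ$ for an explicit compactly supported $Q$, does the equicontinuity/Arzela--Ascoli step you describe take over, and that final step is indeed routine. Your stated ``main obstacle'' (equicontinuity and positivity of the softmax on compacts) is the easy part; the clipping-plus-$g_\delta$ approximation is the genuinely new ingredient, and its absence is a real gap rather than a routine verification.
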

\begin{proof}Assume $f_{0,h}(x,k)$ is as above.  Note that
\begin{equation}\label{eq:bigMess2}
\int f_0(x,k) \log\frac{f_0(x,k)}{f(x,k)} = \int f_0(x,k) \log \frac{f_0(x,k)}{f_{0,h}(x,k)} + \int f_0(x,k) \log \frac{f_{0,h}(x,k)}{f_{h,P}(x,k)}.
\end{equation}
Using Theorem \ref{thm:compact}, we can pick $h_{0}$ such that for every $h < h_{0,x}$,
\begin{equation}
\int f_0(x,k)\log \frac{f_0(x,k)}{f_{0,h}(x,k)} < \epsilon/4.
\end{equation}Now we focus on the second term of Equation (\ref{eq:bigMess2}).  We can make $f_{0,h}(x,k)$ arbitrarily close to a density with bounded response probabilities that are still continuous in $x$.  Choose $\gamma > 0$; for the sake of simplicity, assume $k = 1,2$ (the outcome space can easily be expanded).  Writing $f_0(x,k) = f_0(x)f_0(k|x)$, set
$$f_\gamma(k|x) = \left\{
\begin{array}{ll}
 f_0(k|x) & \mathrm{if \ } f_0(k|x) \in [\gamma, 1-\gamma],   \\
  \gamma & \mathrm{if \ } f_0(k|x) \in [0,\gamma), \\
  1 - \gamma & \mathrm{if \ } f_0(k|x) \in (1-\gamma, 1].   
\end{array}
\right.$$Then, for any $x \in [-x^\prime,x^\prime]$ and $k \in \{1,2\}$,
\begin{equation}\notag
\left| \frac{\int f_0(\mu)f_{0}(k| \mu) \phi_h(x-\mu)d\mu}{\int f_0(\mu)f_{\gamma}(k| \mu) \phi_h(x-\mu)d\mu}-1 \right| < 2 \gamma.
\end{equation}Note that
\begin{align}\notag\int f_0(x,k) \log \frac{f_{0,h}(x,k)}{f_{h,P}(x,k)}&  = \int f_0(x,k) \log \frac{f_{0,h}(x,k)}{f_{h,\gamma}(x,k)}+ \int f_0(x,k) \log \frac{f_{h,\gamma}(x,k)}{f_{h,P}(x,k)}, \\\notag
& \leq \int f_0(x,k) \log \frac{f_{h,\gamma}(x,k)}{f_{h,P}(x,k)} + 2 \gamma.
\end{align}Finally, we approximate $f_{h,\gamma}$ by something that has the desired GLM form.  Fix $0 < \delta < \gamma/(4K)$.  Define
\begin{equation}\notag
g_{\delta}(y_1,\dots,y_k|x) =  \max\left(0, \delta\sum_{k=1}^K f_{\gamma}(k|x) - \sum_{k=1}^K \left|\frac{\exp(y_k)}{\sum_{j=1}^K \exp(y_j)} - f_{\gamma}(k|x)\right|\right),
\end{equation}for $y \in [-b,b]$ for some bound $b$.  This produces a triangular box of ``close'' conditional probabilities around the true conditional probability.  Define $f_{h,\gamma,\delta}$ as
\begin{align}\notag
f_{h,\gamma,\delta}(x,k) & = \int_{-x^\prime}^{x^\prime} \int_{[-b_0,b_0]^{K}} \int_{[-b_1,b_1]^K} f_0(\mu) \frac{g_{\delta}(\beta_{0,1}+\beta_{1,1}\mu,\dots,\beta_{0,K}+\beta_{1,K}\mu|\mu)}{\int_{[-b_0,b_0]^K} \int_{[-b_1,b_1]^K} g_{\delta}(\beta_{0}+\beta_{1}\mu,k|\mu)d\beta_0d\beta_1}\\\label{eq:ugly}
& \times \phi_h(x-\mu) \frac{\exp(\beta_{0,k}+\beta_{1,k}\mu)}{\sum_{j=1}^K \exp(\beta_{0,j}+\beta_{1,j}\mu)}d\mu d\beta_{0,1}\dots d\beta_{1,K}.
\end{align}Because $f_\gamma(k|x)$ is bounded away from $0$ and $1$, we can find finite $b_0$, $b_1$ such that $$\max_{\beta_{0}\in[-b_0,b_0]^K, \beta_1 \in [-b_1,b_1]^K} g_\delta(\beta_{0,1}+\beta_{1,1}x, \dots, \beta_{0,K}+\beta_{1,K}x|x) = \delta $$ for every $x \in [-x^\prime,x^\prime]$.  Let $L$ be the maximum Lipschitz constant of $f_0(k|x)$ for $x\in [-x^\prime,x^\prime]$; these exist because $f_0(k|x)$ is continuous in $x$, which has a compact domain.  Then, observe that
\begin{equation}\notag
\left|\frac{\int f_0(\mu)f_{\gamma}(k|\mu) \phi_{h}(x-\mu) d\mu}{f_{h,\gamma,\delta}} - 1 \right| \leq 2 K L \delta.
\end{equation}Therefore,
\begin{align}\notag
\int f_0(x,k) \log \frac{f_{h,\gamma}(x,k)}{f_{h,P}(x,k) }&  = \int f_0(x,k) \log \frac{f_{h,\gamma}(x,k)}{f_{h,\gamma,\delta}(x,k)} + \int f_0(x,k) \log \frac{f_{h,\gamma,\delta}(x,k)}{f_{h,P}(x,k)},\\\notag
& \int f_0(x,k)\log \frac{f_{h,\gamma,\delta}(x,k)}{f_{h,P}(x,k)} + \leq 2KL \delta.
\end{align}

Set
\begin{equation}\notag
\mathcal{F} = \left\{ \phi_h(x-\mu) \frac{\exp(\beta_{0,k} + \beta_{1,k}x)}{\sum_{j=1}^K \exp(\beta_{0,j} + \beta_{i,j}x)} : x\in [-x^\prime, x^\prime], k \in \{1,\dots,K\}\right\}.
\end{equation}It is equicontinuous for $(\mu,\beta_0,\beta_1) \in [-x^\prime,x^\prime]\times [-b_0,b_0]^K \times [-b_1,b_1]^K$; using this fact, we proceed in the same manner as in Theorem \ref{thm:continuousCon}.

\end{proof}
Theorem \ref{thm:multinomial} follows from Theorem \ref{thm:multiCon}.  As in the Gaussian example, all of the base measures given have sufficient weak support.

\subsection{CMB Computational Details}\label{sec:cmb}
The DP-GLM was run on the largest data size tested several times; log posterior probabilities were evaluated graphically, and in each case the posterior probabilities seem to have stabilized well before 1,000 iterations.  Therefore, all runs for each sample size were given a 1,000 iteration burn-in with samples taken every 5 iterations until 2,000 iterations had been observed.  The scaling parameter $\alpha$ was given a Gamma prior with an initial value set at 1.  The means and variances of each component and all GLM parameters were also given a log-normal hyper distribution.  The model was most sensitive to the hyper-distribution on $\sigma_y$, the GLM variance.  Small values were used ($\log(m_y) \sim N(-3,2)$) to place greater emphasis on response fit.  The non-conjugate parameters were updated using the Hamiltonian dynamics method of \citet{Ne10}.

Models with both conjugate and log-normal base measures were tried on this dataset.  The conjugate base measure model did not capture heteroscedasticity well, so only the log-normal base measure was used for the remaining tests on this dataset.

This model was implemented in Matlab; a run on the largest dataset took about 500 seconds.

\subsection{CCS Computational Details} \label{sec:ccs}
Again, the DP-GLM was run on the largest data size tested several times; log posterior probabilities were evaluated graphically, and in each case the posterior probabilities seem to have stabilized well before 1,000 iterations.  Therefore, all runs for each sample size were given a 1,000 iteration burn-in with samples taken every 5 iterations until 2,000 iterations had been observed.  The scaling parameter $\alpha$ was given a Gamma prior with an initial value set at 1.  The hyperparameters of the conjugate base measure were set manually by trying different settings over four orders of magnitude for each parameter.

All base measures were conjugate, so the sampler was fully collapsed.  $\alpha$ was updated using Hamiltonian dynamics~\citep{Ne10}.  Original results were generated by Matlab; the longest run times were about 1000 seconds.  This method has been re-implemented in Java in a highly efficient manner; the longest run times are now under about 10 seconds.  Run times would likely be even faster if variational methods were used for posterior sampling~\citep{BlJo05}.

\subsection{Solar Computational Details}\label{sec:solar}
Again, the DP-GLM was run on the largest dataset size tested several times; log posterior probabilities were evaluated graphically, and in each case the posterior probabilities seem to have stabilized well before 1,000 iterations.  Therefore, all runs for each sample size were given a 1,000 iteration burn-in with samples taken every 5 iterations until 2,000 iterations had been observed.   The scaling parameter $\alpha$ was set to 1 and the Dirichlet priors to $Dir(1,1,\dots,1)$.  The response parameters were given a Gaussian base distribution with a mean set to 0 and a variance chosen after trying parameters with four orders of magnitude.

All covariate base measures were conjugate and the $\beta$ base measure was Gaussian, so the sampler was collapsed along the covariate dimensions and used in the auxiliary component setting of Algorithm 8 of \citet{Ne00}.  The $\beta$ parameters were updated using Metropolis-Hastings.  Results were in generated by Matlab; run times were substantially faster than the other methods implemented in Matlab (under 200 seconds).

\bibliographystyle{agsm}

\end{document}